\newtheorem{theorem}{Theorem}
\newtheorem{lemma}[theorem]{Lemma}
\newtheorem{corollary}[theorem]{Corollary}
\newcommand{\blockcomment}[1]{}
\DeclareMathOperator*{\argmin}{argmin}
\newcommand{\nb}{\nabla}
\newcommand{\T}{\top}
\newcommand{\iid}{\stackrel{\tiny{\mathrm{iid}}}{\sim}}
\newcommand{\Bern}{\mathrm{Bernoulli}}
\newcommand{\E}{\mathbb{E}}
\newcommand{\R}{\mathbb{R}}
\renewcommand{\P}{\mathbb{P}}
\newcommand{\I}{\mathbbm{1}}
\newcommand{\D}{\mathcal{D}}
\renewcommand{\L}{\mathcal{L}}
\newcommand{\N}{\mathcal{N}}
\renewcommand{\O}{\mathcal{O}}
\newcommand{\Z}{\mathcal{Z}}
\newcommand{\hf}{\hat{f}}
\newcommand{\hft}{\hat{f_t}}
\newcommand{\hfp}{\widehat{f'}}
\newcommand{\hfpt}{\widehat{f'_t}}
\newcommand{\hnb}{\hat{\nb}}
\renewcommand{\b}{\beta}
\renewcommand{\d}{\delta}
\newcommand{\Th}{\Theta}
\renewcommand{\th}{\theta}
\newcommand{\e}{\varepsilon}
\newcommand{\g}{\gamma}
\newcommand{\n}{\eta}
\newcommand{\s}{\sigma}
\newcommand{\opt}{\th_{\mathrm{OPT}}}
\newcommand{\stab}{\th_{\mathrm{STAB}}}
\newcommand{\rrm}{\th_{\mathrm{RRM}}}
\newcommand{\lip}{L_{\mathrm{Lip}}}
\newcommand{\lmax}{\ell_{\mathrm{max}}}
\title{How to Learn when Data Reacts to Your Model: Performative Gradient Descent}
\author[1]{Zachary Izzo}
\author[1, 2]{Lexing Ying}
\author[3]{James Zou}
\affil[1]{Department of Mathematics, Stanford University}
\affil[2]{Institute for Computational and Mathematical Engineering, Stanford University}
\affil[3]{Department of Biomedical Data Science, Stanford University}
\date{}
\begin{document}

\maketitle

\begin{abstract}
Performative distribution shift captures the setting where the choice of which ML model is deployed  changes the data distribution. For example, a bank which uses the number of open credit lines to determine a customer's risk of default on a loan may induce customers to open more credit lines in order to improve their chances of being approved. Because of the interactions between the model and data distribution, finding the optimal model parameters is challenging. Works in this area have focused on finding stable points, which can be far from optimal. Here we introduce \emph{performative gradient descent} (PerfGD), which is the first algorithm which provably converges to the performatively optimal point. PerfGD explicitly captures how changes in the model affects the data distribution and is simple to use. We support our findings with theory and experiments.   
\end{abstract}

\section{Introduction}
A common paradigm in machine learning is to assume access to training and test datasets which are drawn independently from a fixed distribution. In practice, however, this is frequently not the case, and changes in the underlying data distribution can lead to suboptimal model performance. This problem is referred to as \emph{distribution shift} or \emph{dataset shift}.

While there is an extensive body of literature on distribution shift \cite{Dataset2009}, most prior works have focused on exogenous changes in the data distribution due to e.g. temporal or spatial changes. For instance, such changes may occur when a model trained on medical imaging data from one hospital is deployed at a different hospital due to the difference in imaging devices. Time series analysis is another plentiful source of these types of dataset shift. A model trained on stock market data from 50 years ago is unlikely to perform well in the modern market due to changing economic trends; similarly, a weather forecasting model trained on old data will likely have poor performance without accounting for macroscopic changes in climate patterns.

More recently, researchers have sought to address \emph{endogenous} sources of distribution shift, i.e. where the change in distribution is induced by the choice of model. This setting, first explored in \cite{Perdomo2020}, is known as \emph{performative} distribution shift. Such effects can arise for a variety of reasons. The modeled population may try to ``game the system," causing individuals to modify some of their features to receive a more favorable classification (e.g. opening more credit lines to improve one's likelihood of being approved for a loan). Performative effects may also arise when viewing model output as a treatment. For instance, if a bank predicts a customer's default risk is high, the bank may assign that customer a higher interest rate, thereby increasing the customer's chance of defaulting \cite{Drusvyatskiy2020}. As ML systems play an ever-increasing role in daily life, accounting for performative effects will naturally become more and more critical for both the development of effective models and understanding the societal impact of ML.

The original paper \cite{Perdomo2020} and much of the follow-up research \cite{Mendler-Dunner2020, Drusvyatskiy2020, Brown2020} has viewed the performative setting as a dynamical system. The modeler repeatedly observes (samples from) the distribution arising from her choice of model parameters, then, treating this induced distribution as fixed, updates her model by reducing its loss on that fixed distribution. The primary question addressed by these works is under what conditions this process stabilizes, i.e. when will this process converge to a model which is optimal for the distribution it induces? A model with this property is known as a \emph{performatively stable point}.

While performatively stable points may be interesting from a theoretical standpoint, focusing on this objective misses the primary objective of model training: namely, obtaining the minimum \emph{performative loss}, i.e. the loss of the deployed model on the distribution it induces. The aforementioned previous works show that, in certain settings, a performatively stable point is a good proxy for a performatively optimal point, by bounding the distance between these two points in parameter space. In general, however, a performatively stable point may be far from optimal. In other less restrictive settings, a stable point may not even exist, and algorithms designed to find such a point may oscillate or diverge.

\subsection{Our contributions}
Motivated by these shortcomings, we introduce a new algorithm dubbed \emph{performative gradient descent} (PerfGD) which provably converges to the performatively optimal point under realistic assumptions on the data generating process. We demonstrate, both theoretically and empirically, the advantages of PerfGD over existing algorithms designed for the performative setting.

\subsection{Related work}
Dataset shift is not a new topic in ML, but earlier works focused primarily on exogenous changes to the data generating distribution. For a comprehensive survey, see \cite{Dataset2009}.

Performativity in machine learning was first introduced by \cite{Perdomo2020}. The authors introduced two algorithms (repeated risk minimization and repeated gradient descent) as methods for finding a performatively stable point, and showed that under certain smoothness assumptions on the loss and the distribution map, a performatively stable point must lie in a small neighborhood of the performatively optimal point. Their results relied on access to a large-batch or population gradient oracle. In the follow up work \cite{Mendler-Dunner2020}, the authors showed similar results for the stochastic optimization setting. The authors in \cite{Drusvyatskiy2020} analyze a general class of stochastic optimization methods for finding a performatively stable point. They view these algorithms as performing biased stochastic optimization on the fixed distribution introduced by the performatively stable point, and show that the bias decreases to zero as training proceeds. In \cite{Brown2020}, the authors give results analogous to those in \cite{Perdomo2020} when the distribution map also depends on the previous distribution. This models situations in which the population adapts to the model parameters slowly. In this case and under certain regularity conditions, RRM still converges to a stable point, and a stable point must lie within a small neighborhood of the optimum. We note that all of these works aim at finding a performatively stable, rather than performatively optimal, point.

Performativity in ML is closely related to the concept of strategic classification \cite{Hardt2016, Cai2015, Shavit2020, Kleinberg2019, Khajehnejad2019}. Strategic classification is a specific mechanism by which a population adapts to a choice of model parameters; namely, each member of the population alters their features by optimizing a utility function minus a cost. Performativity includes strategic classification as a special case, as we make no assumptions on the specific mechanism by which the distribution changes.

To the best of our knowledge, the only other work which computes the performatively optimal point is \cite{Munro2020}. However, this work differs from ours in several important ways. First, in \cite{Munro2020}, the planner may deploy a different model on each individual from the sample at each time step. In our setting, as in \cite{Perdomo2020}, the model deployment must be uniform across all agents in each time step; testing different models constitutes different deployments, and we also seek the optimal uniform model. Second, \cite{Munro2020} assumes that the performative shift results from strategic classification on the part of the agents. We trade these assumptions for parametric assumptions on the data generating process, but allow for a more general change in the data distribution (i.e. the change need not arise from a utility maximization problem.) In short, while superficially similar, our papers address unique settings and the results are in fact complementary.

Finally, training under performative distribution shift can be seen as a special instance of a zeroth-order optimization problem \cite{Duchi2015, Lattimore2020}, and our use of finite differences to approximate a gradient is a technique also employed by these works. However, the additional structure of our problem leads to algorithms better suited for the particular case of performative distribution shift.

The rest of the paper is structured as follows. In Section \ref{sec: notation}, we introduce the problem framework as well as notation that we will use throughout the paper. We also discuss previous algorithms for performative ML and explore their shortcomings. In Section \ref{sec: parametric}, we introduce our algorithm, performative gradient descent (PerfGD). In Section \ref{sec: theory}, we prove quantitative results on the accuracy and convergence of PerfGD. Section \ref{sec: applications} considers several specific applications of our method and verifies its performance empirically. We conclude in Section \ref{sec: conclusion} and introduce possible directions for future work.


\section{Setup and notation} \label{sec: notation}
We introduce notation which will be used throughout the rest of the paper.
\begin{itemize}
    \item $\Th \subseteq \R^p$ denotes the space of model parameters, which we assume is closed and convex.
    \item $\Z \subseteq \R^d$ denotes the sample space of our data.
    \item $\D:\Th \rightarrow \mathcal{P}(\Z)$ denotes the performative distribution map. That is, when we deploy a model with parameters $\th$, we receive data drawn iid from $\D(\th)$. We will assume that $\D$ is unknown; we only observe it indirectly from the data.
    \item $\ell(z; \th)$ denotes the loss of the model with parameters $\th$ on the point $z$. For regression problems, this will typically be the (regularized) square loss; for (binary) classification problems, this will typically be the (regularized) cross-entropy loss.
    \item $L(\th_1, \th_2)$ denotes the decoupled performative loss: $$L(\th_1, \th_2) = \E_{\D(\th_2)} [\ell(z; \th_1)].$$ Note that $\th_1$ denotes the \emph{model's} parameters, while $\th_2$ denote's the \emph{distribution's} parameters.
    \item $\L(\th) = L(\th, \th)$ denotes the performative loss.
    \item It will be convenient to distinguish the two components of the \emph{performative gradient} $\nb_\th \L(\th).$ We denote $\nb_1 \L(\th) = \nb_{\th_1}L(\th_1, \th_2)|_{\th_i = \th}$ and $\nb_2 \L(\th) = \nb_{\th_2}L(\th_1, \th_2)|_{\th_i = \th}$, so $\nb \L = \nb_1 \L + \nb_2 \L$.
    \item We denote $\opt = \argmin_{\th \in \Th} \L(\th)$.
    \item $\th_{\mathrm{ALG}}$ denotes the final output of the algorithm ALG. The three algorithms we will consider in this paper are repeated risk minimization (RRM), repeated gradient descent (RGD), and our algorithm, performative gradient descent (PerfGD).
\end{itemize}
Using the above notation, our interaction model is as follows. Start with some initial model parameters $\th_0$ and observe data $(z^t_i)_{i=1}^n \iid \D(\th_0)$. Then for $t=0,1\ldots T-1$, compute $\th_{t+1}$ using only information from the previous model parameters $\th_s, s\leq t$ and datasets $(z^s_i)_{i=1}^n$, $s\leq t$. The goal of performative ML is to efficiently compute model parameters $\hat{\th} \approx \opt$. For our purposes, we will mainly consider the number of model deployments $T$ as our measure of efficiency, and our goal is to keep this number of deployments low. This corresponds to a setting where deploying a new model is costly, but once the model has been deployed the marginal cost of obtaining more data and performing computations is low.

\subsection{Previous algorithms}
The authors of \cite{Perdomo2020} formalized the performative prediction problem and introduced two algorithms---repeated risk minimization (RRM) and repeated gradient descent (RGD)---for computing a near-optimal point. We introduce these algorithms below.
\begin{algorithm}
\caption{Repeated gradient descent (RGD) \cite{Perdomo2020}}\label{alg: rgd}
\begin{algorithmic}
\WHILE{not converged}
    \STATE Draw $z^{(t)}_i \iid \D(\th_t)$, $i = 1,\ldots,n_t$.
    \STATE $\hnb_1\L(\th_t) \gets \frac{1}{n_t}\sum_{i=1}^{n_t} \nb \ell(z^{(t)}_i; \th_t)$
    \STATE $\th_{t+1} \gets \th_t - \eta_t \hnb_1\L(\th_t)$
    \STATE $t \gets t+1$
\ENDWHILE
\end{algorithmic}
\end{algorithm}

\begin{algorithm}
\caption{Repeated risk minimization (RRM) \cite{Perdomo2020}}\label{alg: rrm}
\begin{algorithmic}
\WHILE{not converged}
    \STATE Draw $z^{(t)}_i \iid \D(\th_t)$, $i = 1,\ldots,n_t$.
    \STATE $\th_{t+1} \gets \argmin_{\th} \sum_{i=1}^{n_t} \ell(z^{(t)}_i; \th)$
    \STATE $t \gets t+1$
\ENDWHILE
\end{algorithmic}
\end{algorithm}
The authors show that under certain assumptions on the loss and distribution shift, RRM and RGD converge to a \emph{stable} point (i.e. model parameters $\stab$ such that $\stab = \argmin_{\th} L(\th, \stab)$), and that $\stab \approx \opt$. When these assumptions fail, however, RRM and RGD may converge to a point very far from $\opt$, or may even fail to converge at all.

\subsection{Why aren't previous algorithms sufficient?} \label{sec: rrm bad}


As an example, let $\Z = \R$, $\Th = [-1, 1]$, and $\ell(z; \th) = z\th$. Define the distribution map $\D(\th) = \N(a_1\th + a_0, \s^2)$ for some fixed $\s^2$. The performative loss is then given by 
\begin{equation*}
    \min_{\th \in [-1, 1]} \E_{\N(a_1\th + a_0, \s^2)}[\th z] = \min_{\th \in [-1, 1]} a_1\th^2 + a_0\th.
\end{equation*}
The optimal solution is at $\opt = -a_0/2a_1$. Let us analyze the behavior of RRM. Letting $(z^t_i)_{i=1}^n$ denote the data sampled from $\D(\th_t)$ and $\bar{z}^t_i = \frac1n\sum_{i=1}^n z^t_i$, RRM will set $\th_{t+1} = \I\{\bar{z}^t_i < 0\} - \I\{\bar{z}^t_i \geq 0\}$. If $a_1 > a_0 \geq 0$ and a sufficient number of samples are drawn at each deployment, and assuming $\s^2$ is small, with high probability when $\th_t = 1$, we will have $\bar{z}^t_i \approx a_1 + a_0 > 0 \Rightarrow \th_{t+1} = -1$, and when $\th_t = -1$, we will have $\bar{z}^t_i \approx -a_1 + a_0 < 0 \Rightarrow \th_{t+1} = 1$. That is, RRM will oscillate between $\th = \pm 1$ and fail to converge even to a stable point.

Next we analyze RGD. At each step, we update $\th_{t+1} = \th_t - \n \E_{\D(\th_t)}\nb [\th_t z]$. If this procedure converges, it will converge to a point $\stab$ such that $\E_{\D(\stab)}\nb [\stab z] = 0$. We can evaluate this expectation explicitly, and we see that $\stab = -a_0/a_1 = 2\opt$. Thus we see that in this simple case, RRM and RGD fail to find the optimal point, motivating our search for improved algorithms. In Section \ref{sec: pricing}, we will return to a more general version of the problem introduced above and verify that our method, PerfGD, does indeed converge to $\opt$.

\section{General formulation of PerfGD} \label{sec: parametric}
Our main goal is to devise a more accurate estimate for the true performative gradient $\nb \L = \nb_1 \L + \nb_2 \L$. We already have a good stochastic estimate for $\nb_1 \L$ (this is just the gradient used by RGD), so we just need to estimate $\nb_2\L$, i.e. the part of the gradient which actually accounts for the shift in the distribution.

In order to accomplish this, we make some parametric assumptions on $\D(\th)$. Namely, we will assume that $\D(\th)$ has a continuously differentiable density $p(z; f(\th))$, where the functional form of $p(z; w)$ is known and the quantity $f(\th)$ is easily estimatable from a sample drawn from $\D(\th)$. For instance, if $\D(\th)$ is in an exponential family, it has a density of the form $\frac{h(z)\exp[\n(\th)^\T T(z)]}{\int h(y)\exp[\n(\th)^\T T(y)]\, dy}$, which corresponds to the known function
$$p(z; w) = \frac{h(z)\exp[w^\T T(z)]}{\int h(y)\exp[w^\T T(y)]\, dy}$$
and unknown function $f(\th) = \n(\th)$. For standard exponential families, there is a straightforward method of estimating the natural parameters $\n(\th)$ from a sample from $\D(\th)$. Thus any exponential family fits within this framework.

For concreteness, for the majority of the paper we will assume that $\D(\th) = \sum_{i=1}^K \g_i \N(\mu_i(\th), \Sigma_i)$, $\sum_{i=1}^k \g_i = 1$, $\g_i \geq 0$ is a mixture of normal distributions with varying means and fixed covariances.
As any probability distribution with a smooth density can be approximated to arbitrary precision via a mixture of Gaussians, we will see that this parametric assumption on $\D(\th)$ gives rise to a very powerful method.

\subsection{Algorithm description} \label{sec: general alg description}
To describe the algorithm, it will be convenient to introduce some notation. For any collection of vectors $v_0, v_1,\ldots \in \R^p$ and any two indices $i<j$, we will denote by $v_{i:j}$ the matrix whose columns consist of $v_i, v_{i+1}, \ldots, v_j$, i.e.
\begin{equation}
    v_{i:j} = \begin{bmatrix} | & | & & | \\ v_i & v_{i+1} & \cdots & v_j \\ | & | & & | \end{bmatrix} \in \R^{p \times (j - i + 1)}.
\end{equation}
We also define $\textbf{1}_H \in \R^H$ to be the vector consisting of $H$ ones. Recalling that the space of model parameters $\Th$ is assumed to be closed and convex, we define $\mathrm{proj}_\Th(\th)$ to be the Euclidean projection of $\th$ onto $\Th$. Using this notation the pseudocode for PerfGD is given by Algorithm \ref{alg: general PerfGD}.
\begin{algorithm}[!ht]
   \caption{PerfGD}
   \label{alg: general PerfGD}
\begin{algorithmic}
   \STATE {\bfseries Input:} Learning rate $\n$; gradient estimation horizon $H$; parametric estimator function $\hf$; gradient estimator function $\hnb \L_2$ \\[10pt]
   
   \STATE Take first $H$ updates via RGD
   \FOR{$t = 0$ {\bfseries to} $H - 1$}
   \STATE Draw a new sample and compute estimate for $f(\th_t)$
   \STATE $(z_i)_{i=1}^n \iid \D(\th_t)$
   \STATE $f_t \leftarrow \hf((z_i)_{i=1}^n)$ \\[10pt]
   
   \STATE Compute naive gradient estimate and update parameters
   \STATE $\nb_1\L \leftarrow \frac1n\sum_{i=1}^n \nb \ell(z_i; \th_t)$
   \STATE $\th_{t+1} \leftarrow \mathrm{proj}_{\Th}(\th_t - \n \nb_1\L)$
   \ENDFOR \\[10pt]
   
   \STATE Run gradient descent with full gradient estimate
   \WHILE{not converged}
   \STATE Draw a new sample and compute estimate for $f(\th_t)$
   \STATE $(z_i)_{i=1}^n \iid \D(\th_t)$
   \STATE $f_t \leftarrow \hf((z_i)_{i=1}^n)$ \\[10pt]
   
   \STATE Estimate the first part of the performative gradient
   \STATE $\nb_1\L \leftarrow \frac1n\sum_{i=1}^n \nb \ell(z_i; \th_t)$ \\[10pt]
   
   \STATE Estimate the second part of the performative gradient 
   \STATE $\Delta \th \leftarrow \th_{t - H: t - 1} - \th_t \textbf{1}_H^\T$
   \STATE $\Delta f \leftarrow f_{t - H : t - 1} - f_t \textbf{1}_H^\T$
   \STATE $\frac{\Delta f}{\Delta\th} \leftarrow (\Delta f) (\Delta \th)^\dagger$
   \STATE $\nb_2\L \leftarrow \hnb\L_2(f_t, \frac{\Delta f}{\Delta\th})$ \\[10pt]
   
   \STATE Update the model parameters
   \STATE $\th_{t+1} \leftarrow \mathrm{proj}_{\Th}(\th_t - \n (\nb_1\L + \nb_2\L))$
   \STATE $t \leftarrow t + 1$
   
   \ENDWHILE
\end{algorithmic}
\end{algorithm}

\subsection{Derivation}
Assume that $\D(\th)$ has density $p(z; f(\th))$ with $p(z; w)$ known for arbitrary $w$. The performative loss is given by $\L(\th) = \int \ell(z; \th) p(z; f(\th)) \, dz$. Assuming that $p$ and $f$ are continuously differentiable, we can compute the performative gradient:
\begin{equation} \label{eq: general PerfGD grad}
    \nb\L(\th) = \underbrace{\int \nb \ell(z; \th) p(z; f(\th)) \, dz}_{\nb_1\L}  + \underbrace{\int \ell(z; \th) \frac{df}{d\th}^\T \partial_2 p(z; f(\th)) \, dz}_{\nb_2\L}.
\end{equation}
Note that $\nb_1\L = \E_{\D(\th)}[\nb \ell(z; \th)]$ and we can obtain an estimate for this quantity by simply averaging $\nb\ell$ over our sample from $\D(\th)$. For $\nb_2\L$, the only unknown quantities are $f(\th)$ and $df/d\th$. By assumption, $f(\th)$ should be easily estimatable from our sample, i.e. there exists an estimator function $\hf$ which, given a sample $(z_i)_{i=1}^n \iid \D(\th)$ returns $\hf((z_i)_{i=1}^n)\approx f(\th)$.

To estimate $df/d\th$, we use a finite difference approximation. By Taylor's theorem, we have $\Delta f \approx \frac{df}{d\th} \Delta \th$.
By taking a pseudoinverse of $\Delta \th$, we obtain an estimate for the derivative: $\frac{df}{d\th} \approx \Delta f (\Delta \th)^\dagger$.
We require that this this system is overdetermined, i.e. $H \geq p$, to avoid overfitting to noise in the estimates of $f$ and bias from the finite difference approximation to the derivative. (Recall that $H$ is the number of previous finite differences used to estimate $df/d\th$, and $p$ is the dimension of $\th$.)

Substituting these approximations for $f(\th)$ and $df/d\th$ into the expression for $\nb_2\L$, we can then evaluate or approximate the integral using our method of choice. One universally applicable option is to use a REINFORCE-style approximation \cite{REINFORCE}:
\begin{align}
    \nb_2 \L &= \int \ell(z; \th) \frac{df}{d\th}^\T \partial_2[\log p(z; f(\th))] p(z; f(\th)) \, dz \nonumber \\
    &= \E_{\D(\th)}\left[ \ell(z; \th) \frac{df}{d\th}^\T \partial_2[\log p(z; f(\th))] \right]. \label{eq: REINFORCE approximation}
\end{align}
Since $p$ is known, $\partial_2 \log p$ is known as well, and we can approximate equation (\ref{eq: REINFORCE approximation}) by averaging the expression in the expectation over our sample $(z_i)_{i=1}^n$, substituting our approximations for $f(\th)$ and $df/d\th$. Any technique which gives an accurate estimate for $\nb_2\L$ is also acceptable, and we will see in the case of a Gaussian distribution that a REINFORCE estimator of the gradient is unnecessary. We refer to the approximation of the full gradient $\nb\L = \nb_1 \L + \nb_2 \L$ obtained by this procedure as $\hnb \L$.

\section{Theoretical results} \label{sec: theory}
In this section, we quantify the performance of PerfGD theoretically. For simplicity, we focus on the specific case where $\D(\th) = \N(f(\th), \s^2)$ is a one-dimensional Gaussian with fixed variance, and our model also has a single parameter $\th \in \R$. We also use a single previous step to estimate $df/d\th$ (i.e. $H = 1$). For results with longer estimation horizon $(H > 1)$ and stochastic errors on $\hf$, see Appendix \ref{sec: long horizon convergence}.

Below we state our assumptions on the mean function $f$, the loss function $\ell$, and the errors on our estimator $\hf$ of $f$.
\begin{enumerate}
    \item We assume that $f$ has bounded first and second derivatives: $|f'(\th)| \leq F$ and $|f''(\th)| \leq M$ for all $\th \in \R$. \label{cond: f derivs}
    \item The estimator $\hf$ for $f$ has bounded error: $\hf(\th) = f(\th) + \e(\th)$ and $|\e(\th)| \leq \d$. \label{cond: f err}
    \item The loss is bounded: $|\ell(z; \th)| \leq \lmax$. \label{cond: loss}
    \item The gradient estimator $\hnb \L$ is bounded from below and above: $g \leq |\hnb \L| \leq G$. \label{cond: perf grad}
    \item The true performative gradient is upper bounded by $G$: $|\nb \L| \leq G$. \label{cond: true perf grad}
    \item The true performative gradient is $\lip$-Lipschitz: $| \nb \L(\th) - \nb \L(\th') | \leq \lip | \th - \th' |$.
    \item The performative loss is convex.
\end{enumerate}
Lastly, we assume that all of the integrals and expectations involved in computing $\hnb \L$ are computed exactly, so the error comes only from the estimate $\hf$ and the finite difference used to approximate $df/d\th$.

We will prove that PerfGD converges to an approximate critical point, i.e. a point where $\nb\L \approx 0$. The lower bound in condition \ref{cond: perf grad} can therefore be thought of as a stopping criterion for PerfGD, i.e. when the gradient norm drops below the threshold $g$, we terminate. As a corollary to our main theorem, we will show that this criterion can be taken to be $g \propto \d^{1/5}$. We begin by bounding the error of our approximation $\hnb \L_t$. In what follows, $\nb \L_t = \nb \L(\th_t)$ and $\hnb \L_t = \hnb \L(\th_t)$.
\begin{lemma} \label{thm: gradient error bound}
With step size $\n$, the error of the performative gradient is bounded by
\begin{equation*}
    |\hat{\nb}\L_t - \nb\L_t| = \O\left(\lmax \left(MG\n + \frac{\d}{g} \frac1\n + F\d \sqrt{\log \frac1\d} \right) \right).
\end{equation*}
\end{lemma}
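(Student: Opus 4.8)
The plan is to reduce everything to the second gradient component. Since $\nb_1\L=\E_{\D(\th_t)}[\nb\ell(z;\th_t)]$ does not involve the unknown $f$ and, by the exact-integral assumption, is evaluated against the true sampling distribution $\N(f(\th_t),\s^2)$, the estimator $\hnb_1\L$ equals $\nb_1\L$ exactly. Hence $\hnb\L_t-\nb\L_t=\hnb_2\L_t-\nb_2\L_t$, and the whole task is to bound the error in the performative correction term. For the Gaussian map the score identity gives $\nb_2\L_t=f'(\th_t)\,\E_{z\sim\N(f(\th_t),\s^2)}\!\big[\ell(z;\th_t)\tfrac{z-f(\th_t)}{\s^2}\big]$, while PerfGD forms $\hnb_2\L_t=\hfpt\,\E_{z\sim\N(f(\th_t),\s^2)}\!\big[\ell(z;\th_t)\tfrac{z-\hf(\th_t)}{\s^2}\big]$, using the finite-difference slope $\hfpt$ and the estimated center $\hf(\th_t)=f(\th_t)+\e(\th_t)$ in place of the true slope $f'(\th_t)$ and center $f(\th_t)$.

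First I would substitute $\hfpt=f'(\th_t)+\Delta_{f'}$ and $\hf(\th_t)=f(\th_t)+\e_t$ and expand the difference into two pieces: a slope-error term $\Delta_{f'}\,\E[\ell\,\tfrac{z-f}{\s^2}]$ and a center-error term $-\hfpt\,\tfrac{\e_t}{\s^2}\E[\ell]$ (the cross term $\Delta_{f'}\e_t$ being higher order). Each expectation is controlled using only $|\ell|\le\lmax$ (assumption 3): the score factor contributes $|\E[\ell\,\tfrac{z-f}{\s^2}]|\le\tfrac{\lmax}{\s}\sqrt{2/\pi}$ via the Gaussian mean absolute deviation, and $|\E[\ell]|\le\lmax$.

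The heart of the argument is the finite-difference slope error. With $H=1$, $\hfpt=\frac{\hf(\th_{t-1})-\hf(\th_t)}{\th_{t-1}-\th_t}$; Taylor-expanding $f(\th_{t-1})$ about $\th_t$ and isolating the noise gives $\Delta_{f'}=\tfrac12 f''(\xi)(\th_{t-1}-\th_t)+\frac{\e_{t-1}-\e_t}{\th_{t-1}-\th_t}$ for some intermediate $\xi$. Since $\th\in\R$ the update is exactly $\th_t=\th_{t-1}-\n\hnb\L_{t-1}$, so assumption 4 pins the step length to $|\th_{t-1}-\th_t|=\n|\hnb\L_{t-1}|\in[\n g,\n G]$. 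The discretization bias is then bounded by $\tfrac12 M|\th_{t-1}-\th_t|\le\tfrac12 MG\n$ using $|f''|\le M$ (assumption 1), giving the $MG\n$ term, while the noise piece is bounded by $\frac{2\d}{|\th_{t-1}-\th_t|}\le\frac{2\d}{\n g}$ using $|\e|\le\d$ (assumption 2), giving the $\tfrac{\d}{g}\tfrac1\n$ term. This tension — the step must be large to damp the $\d$-noise but small to damp the $M$-bias — is the crux of the lemma, and the lower bound $g$ from assumption 4 is precisely what keeps the noise term finite; in the general (projected) case the lower bound on the step would require additional care, but here $\Th=\R$ makes it immediate.

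Finally, the center-error term $-\hfpt\tfrac{\e_t}{\s^2}\E[\ell]$ has leading size $F\d\lmax$, since $|\hfpt|\le F+|\Delta_{f'}|$ (assumption 1) and $|\e_t|\le\d$. To control the unbounded score $\tfrac{z-\hf}{\s^2}$ against a merely bounded loss, I would split the expectation at the radius $|z-f(\th_t)|\le\s\sqrt{2\log(1/\d)}$, outside which the Gaussian carries mass at most $\O(\d)$; the bulk contribution then carries the extra $\sqrt{\log(1/\d)}$ factor, yielding the $F\d\sqrt{\log(1/\d)}$ term. Collecting the three contributions, multiplying through by the $\lmax$ prefactor, and absorbing the $\s$-dependent constants and the higher-order cross terms (e.g. $\Delta_{f'}\e_t$) into the $\O(\cdot)$ gives the claimed bound. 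I expect the slope-error tension of the third paragraph to be the main obstacle, with the tail-truncation step producing the $\sqrt{\log(1/\d)}$ factor a close second.
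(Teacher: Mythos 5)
Your proposal is correct and its core — the finite-difference analysis of $\hfpt$ via Taylor's theorem together with the step-size sandwich $\n g \leq |\th_{t+1}-\th_t| \leq \n G$ from assumption 4, yielding the bias term $MG\n$ and the noise term $\frac{\d}{g}\frac{1}{\n}$ — is exactly the paper's Lemma \ref{thm: f' error bound}. Where you diverge is in how the remaining error is organized. The paper treats \emph{every} integral in $\hnb\L_t$ as being computed against the perturbed density $p(z;\hft)$, so it must bound $|\hnb_1\L_t - \nb_1\L_t| \leq \lmax\int|p(z;\hft)-p(z;f_t)|\,dz$ and $\int|\partial_2 p(z;\hft)-\partial_2 p(z;f_t)|\,dz$; it does this by computing Lipschitz constants of $p$ and $\partial_2 p$ in the second argument and performing a bulk/tail split with Gaussian tail bounds, which is where its $F\d\sqrt{\log(1/\d)}$ factor comes from. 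You instead keep the expectation over the true distribution $\N(f_t,\s^2)$ and perturb only the two scalar factors (slope $\hfpt$ and center $\hft$ inside the score), which for the Gaussian case is more elementary: the center error collapses algebraically to $-\hfpt\frac{\e_t}{\s^2}\E[\ell]=\O(F\lmax\d)$, with no truncation and no log factor needed, and the slope error is controlled by the Gaussian mean absolute deviation. Both routes land inside the claimed bound. Two small remarks: first, your claim that $\hnb_1\L_t=\nb_1\L_t$ exactly is a different (but defensible) reading of the exact-integral idealization than the paper's, and is harmless since the paper's $\O(\lmax\d\sqrt{\log(1/\d)})$ bound for that term is absorbed anyway; second, your final paragraph is internally inconsistent — having already reduced the center error to $-\hfpt\frac{\e_t}{\s^2}\E[\ell]$, there is no unbounded score left to truncate, so the $\sqrt{\log(1/\d)}$ discussion is superfluous in your decomposition (it is genuinely needed only in the paper's density-perturbation route). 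Neither point is a gap in correctness.
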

Next, we quantify the convergence rate of PerfGD as well as the error of the final point to which it converges.
\begin{theorem} \label{thm: convergence}
With step size
$$\n = \sqrt{\frac{1}{MG^2 T} + \frac{\d}{MGg}},$$
the iterates of PerfGD satisfy
$$\min_{1 \leq t \leq T}| \nb \L_t |^2 = \max\left\{\O\left(\lmax \sqrt{\frac{MG^2}{T} + \frac{MG^3\d}{g}}\right), \: \O(g^2 + \e_*) \right\},$$
where $\e_* = (T^{-1} + \d) \cdot \O(\mathrm{poly}(\lmax, M, G, g^{-1}))$.
\end{theorem}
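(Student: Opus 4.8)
The plan is to run the standard biased-gradient-descent analysis, using Lemma \ref{thm: gradient error bound} to control the bias. Since the theoretical setting takes $\th \in \R$ (so $\Th = \R$ and the projection is vacuous), the update is simply $\th_{t+1} = \th_t - \n\hnb\L_t$, and I write $\hnb\L_t = \nb\L_t + e_t$ where Lemma \ref{thm: gradient error bound} bounds $|e_t| \leq B$ with $B = \O\!\left(\lmax\left(MG\n + \frac{\d}{g\n} + F\d\sqrt{\log\frac1\d}\right)\right)$. The Lipschitz-gradient assumption makes $\L$ a $\lip$-smooth function, so the descent lemma gives
$$\L(\th_{t+1}) \leq \L(\th_t) + \nb\L_t(\th_{t+1} - \th_t) + \frac{\lip}{2}(\th_{t+1} - \th_t)^2.$$
Substituting the update, bounding the quadratic term by $\frac{\lip G^2\n^2}{2}$ via $|\hnb\L_t| \leq G$ (condition \ref{cond: perf grad}) and the cross term by $-\n\nb\L_t e_t \leq \n G B$ via $|\nb\L_t| \leq G$ (condition \ref{cond: true perf grad}) and Cauchy--Schwarz, I obtain the per-step inequality $\n|\nb\L_t|^2 \leq \L(\th_t) - \L(\th_{t+1}) + \n G B + \frac{\lip G^2\n^2}{2}$.

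First I would sum this over $t = 1,\dots,T$, telescope the loss differences, and use $\L(\th_1) - \min\L \leq 2\lmax$ (which follows from $|\ell| \leq \lmax$, condition \ref{cond: loss}, together with convexity guaranteeing an attained minimum). Dividing by $\n T$ and using $\min_t|\nb\L_t|^2 \leq \frac1T\sum_t|\nb\L_t|^2$ yields $\min_{1\leq t\leq T}|\nb\L_t|^2 \leq \frac{2\lmax}{\n T} + G B + \frac{\lip G^2\n}{2}$. Next I would expand $G B$ via Lemma \ref{thm: gradient error bound}, splitting it into a term linear in $\n$ (namely $\O(\lmax MG^2\n)$), a term proportional to $1/\n$ (namely $\O(\frac{\lmax G\d}{g\n})$), and a $\n$-independent remainder $\O(\lmax FG\d\sqrt{\log\frac1\d})$. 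Collecting, the coefficient of $1/\n$ is $\O(\lmax(T^{-1} + G\d/g))$ and the coefficient of $\n$ is $\O(\lmax MG^2)$ (absorbing $\lip G^2$). Plugging in the prescribed $\n = \sqrt{\frac{1}{MG^2T} + \frac{\d}{MGg}}$, which is exactly the minimizer balancing these two coefficients, collapses them to the common value $\O(\lmax\sqrt{MG^2/T + MG^3\d/g})$; the leftover $\n$-independent remainder and the $T^{-1}$ and $\d$ contributions are precisely what I would bundle into $\e_* = (T^{-1} + \d)\cdot\O(\mathrm{poly})$. This handles the case where the algorithm completes all $T$ steps.

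The $\max$ in the statement, and in particular the $g^2$ floor, comes from the stopping rule. If instead the algorithm halts early at some step $t$ because $|\hnb\L_t| < g$, then the triangle inequality gives $|\nb\L_t| \leq |\hnb\L_t| + |e_t| < g + B$, so $|\nb\L_t|^2 = \O(g^2 + B^2)$; at the chosen step size $B^2 = \O((T^{-1}+\d)\cdot\mathrm{poly}) = \O(\e_*)$, giving the second branch $\O(g^2 + \e_*)$. Since exactly one of the two cases occurs, the achieved $\min_t|\nb\L_t|^2$ is at most the larger of the two bounds, which is the claimed $\max$.

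I expect the main obstacle to be the bookkeeping after expanding $G B$: verifying that the stated $\n$ is genuinely the balancing minimizer and that every residual term --- the slowly growing $\sqrt{\log(1/\d)}$ factor, the cross terms from squaring $B$ in the early-stop branch, and the $\lip G^2\n$ smoothness contribution --- is dominated either by the leading $\O(\lmax\sqrt{\cdots})$ rate or by $\e_*$. The cleanest potential snag is the appearance of $\lip$: the stated rate carries $M$ rather than $\lip$ in the $\n$-coefficient, so I would need either the structural bound $\lip = \O(\lmax M)$ in the Gaussian model or to treat $\lip$ as a hidden constant inside the $\O(\cdot)$.
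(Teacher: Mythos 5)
Your proposal is correct and follows essentially the same route as the paper: the $\lip$-smooth descent lemma with the biased gradient, telescoping against the $\lmax$ bound on the loss, balancing the $1/\n$ and $\n$ coefficients with the prescribed step size, and a separate early-stopping branch giving the $\O(g^2 + \e_*)$ term of the max. The only cosmetic difference is that the paper absorbs the quadratic term by expanding $|\hnb\L_t|^2 \leq 2|\nb\L_t|^2 + 2|E_t|^2$ and moving the first part to the left-hand side (so $\lip$ only enters the denominator $\n - \lip\n^2$, which is $\Theta(\n)$ for small $\n$), which disposes of the $\lip$-versus-$M$ bookkeeping concern you flag at the end.
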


\blockcomment{
\begin{theorem} \label{thm: convergence}
Let $\L_t = \L(\th_t)$. Then there exist constants $c_1, c_2, c_3 > 0$ such that with step size $\n = \O(\d^{c_1}/L)$, we have
\begin{equation*}
    \min_{1 \leq t \leq T}| \nb \L_t |^2 = \max\left\{\O\left(\frac{L\lmax}{T} + \e_*\right),  \O(g^2 + \d^{c_2}) \right\},
\end{equation*}
where $\e_* = \d^{c_3}\cdot \O(\mathrm{poly}(\lmax, g^{-1}, G, L, F, M))$.
\end{theorem}
}
Theorem \ref{thm: convergence} shows that PerfGD converges to an approximate critical point. A guarantee on the gradient norm can easily be translated into a bound on the distance of $\th_t$ to $\opt$ with mild additional assumptions. For instance, if $\L$ is $\alpha$-strongly convex, then a standard result from convex analysis implies that $|\th_t - \opt| \leq \alpha^{-1}|\nb \L_t|$. The proof amounts to combining the error bound from Lemma \ref{thm: gradient error bound} with a careful analysis of gradient descent for $\lip$-smooth functions. For details, see the appendix.

Lastly, as as a corollary to Theorem \ref{thm: convergence}, we see that we can choose the stopping criterion to be $g \propto \d^{1/5}$.

\begin{corollary}
With stoppping criterion $g \propto \d^{1/5}$, the iterates of PerfGD satisfy
$$\min_{1\leq t \leq T} |\nb\L_t|^2 = \O\left(\lmax \sqrt{\frac{MG^2}{T} + MG^3 \d^{4/5}} \right).$$
In particular, this suggests that the error in PerfGD will stop decaying after approximately $T \propto \d^{-4/5}$ iterations.
\end{corollary}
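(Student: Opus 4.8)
The plan is to derive the corollary as a direct specialization of Theorem~\ref{thm: convergence}, substituting the stopping threshold $g \propto \d^{1/5}$ and verifying that the second branch of the max is absorbed into the first. The exponent $1/5$ is not arbitrary, so the first thing I would record is the balancing computation that produces it. The first branch of the theorem's bound contains the term $\sqrt{MG^3 \d / g}$, which grows as $g$ shrinks, while the second branch is floored at $g^2$, which shrinks as $g$ shrinks. Equating these competing effects means solving $\sqrt{\d/g} \asymp g^2$, i.e. $\d/g = g^4$, whose solution is exactly $g \propto \d^{1/5}$. This both motivates the choice and tells us in advance that the two branches should end up of the same order after substitution.

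First I would substitute $g = c\,\d^{1/5}$ into the first branch $\O\!\left(\lmax \sqrt{MG^2/T + MG^3\d/g}\right)$. Since $\d/g = c^{-1}\d^{4/5} = \O(\d^{4/5})$, this term becomes $\O\!\left(\lmax \sqrt{MG^2/T + MG^3\d^{4/5}}\right)$, which is precisely the claimed right-hand side. Next I would turn to the second branch $\O(g^2 + \e_*)$ and show it is dominated by the first. For the $g^2$ piece this is immediate: $g^2 = \O(\d^{2/5})$, and the leading $\d$-dependent contribution of the first branch is $\sqrt{MG^3 \d^{4/5}} = \O(\d^{2/5})$, so $g^2$ is of the same order and is subsumed.

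The remaining and most delicate step is controlling $\e_* = (T^{-1} + \d)\cdot \O(\mathrm{poly}(\lmax, M, G, g^{-1}))$. Here the danger is the factor $g^{-1} = \O(\d^{-1/5})$: negative powers of $\d$ entering through the polynomial could in principle overwhelm the $\d$ and $T^{-1}$ prefactors. To handle this I would return to the proof of Theorem~\ref{thm: convergence} and extract the exact degree $k$ of $g^{-1}$ appearing in $\e_*$. For the $\d$-prefactor piece the claim then reduces to checking that the net exponent $1 - k/5$ is at least $2/5$ (equivalently $k \le 3$), so that this piece is $\O(\d^{2/5})$ and is dominated by the first branch; the $T^{-1}$-prefactor piece is handled analogously, using that in the intended operating regime $T \propto \d^{-4/5}$ one has $T^{-1} \asymp \d^{4/5}$. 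I expect this exponent-bookkeeping to be the main obstacle, since it is the one place where the corollary genuinely uses the internal structure of the theorem rather than treating its bound as a black box.

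Finally, with both pieces of the second branch shown to be $\O$ of the first branch, taking the maximum leaves only the first branch, establishing the stated bound. I would close by reading off the regime in which the two terms inside the square root balance: $MG^2/T \lesssim MG^3\d^{4/5}$ holds once $T \gtrsim \d^{-4/5}$, at which point the $\d^{4/5}$ term dominates and further increasing $T$ yields no improvement. This is exactly the remark that the error stops decaying after $T \propto \d^{-4/5}$ iterations.
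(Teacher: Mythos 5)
Your proposal is correct and follows essentially the same route as the paper, which simply substitutes $g \propto \d^{1/5}$ into Theorem~\ref{thm: convergence} and matches the leading-order $\d$-behavior of the two branches of the max. The exponent bookkeeping you defer does go through: from the theorem's proof one has $\e_* = \mathbf{E}^2 = \O\bigl(\lmax^2 (M/T + MG\d/g)\bigr)$, so $g^{-1}$ appears only to the first power and the $\d$-prefactor piece is $\O(\d^{4/5})$, comfortably dominated by the $\O(\d^{2/5})$ leading term of the first branch.
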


The corollary follows trivially from the expression for $\e_*$ and by matching the leading order behavior in $\d$ of the two terms in the max in Theorem \ref{thm: convergence}.


\section{Applying PerfGD} \label{sec: applications}
In this section we will show by way of several examples that this simple framework can easily handle performative effects in many practical contexts. For concreteness, we will work with Gaussian distributions with fixed covariance, i.e. $\D(\th) = \N(\mu(\th), \Sigma)$. Using the terminology from Section \ref{sec: general alg description}, for a $d$-dimensional Gaussian we have $p(z; w) = \frac{1}{\sqrt{(2\pi)^d \det \Sigma}}e^{-\frac12 (z - w)^\T \Sigma^{-1} (z - w)}$ and $f(\th) = \mu(\th)$ is the mean of the Gaussian. Our estimator $\hat{\mu}$ for $\mu(\th)$ is just the sample average: $\hat{\mu}((z_i)_{i=1}^n) = \frac1n\sum_{i=1}^n z_i$. Of particular note is the form that $\nb_2\L$ takes in this case. An elementary calculation yields
\begin{align}
    \nb_2 \L &= \int \ell(z; \th) \frac{d\mu}{d\th}^\T \Sigma^{-1}(z - \mu(\th)) p(z; \mu(th)) \, dz \nonumber \\
    &= \E_{\D(\th)} \left[\ell(z; \th) \frac{d\mu}{d\th}^\T \Sigma^{-1}(z - \mu(\th))\right]. \label{eq: gaussian pgd}
\end{align}
Equation (\ref{eq: gaussian pgd}) shows that we can approximate $\nb_2\L$ by averaging the expression inside the expectation over our sample from $\D(\th)$ without the need for the REINFORCE trick or other more complicated methods of numerically evaluating the integral. Specifically, we have
$$\hnb_2\L(\mu, \frac{d\mu}{d\th}) = \frac1n\sum_{i=1}^n \ell(z_i; \th) \frac{d\mu}{d\th}^\T \Sigma^{-1}(z_i - \mu).$$
We present each of the following experiments in a fairly general form. For all of the specific constants we used for both data generation and training, see the appendix. In all of the figures below, the shaded region denotes the standard error of the mean over 10 trials for the associated curve.

\subsection{Toy examples: Mixture of Gaussians and nonlinear mean} \label{sec: toy}
Here we verify that PerfGD converges to the performatively optimal point for some simple problems similar (but slightly more difficult than) to the one introduced in Section \ref{sec: rrm bad}. In both cases we take $\ell(z; \th) = \th z$ and $\Th = [-1, 1]$.

For the first example, we set $\D(\th) = \N(\mu(\th), \s^2)$ with $\mu(\th) = \sqrt{a_1\th + a_0}$. Since $\mu$ is nonlinear, estimating $d\mu/d\th$ is more challenging. Despite this fact, PerfGD still finds the optimal point. The results are shown in Figure \ref{fig: sqrt} below.
\begin{figure}[h!]
\includegraphics[width=\linewidth]{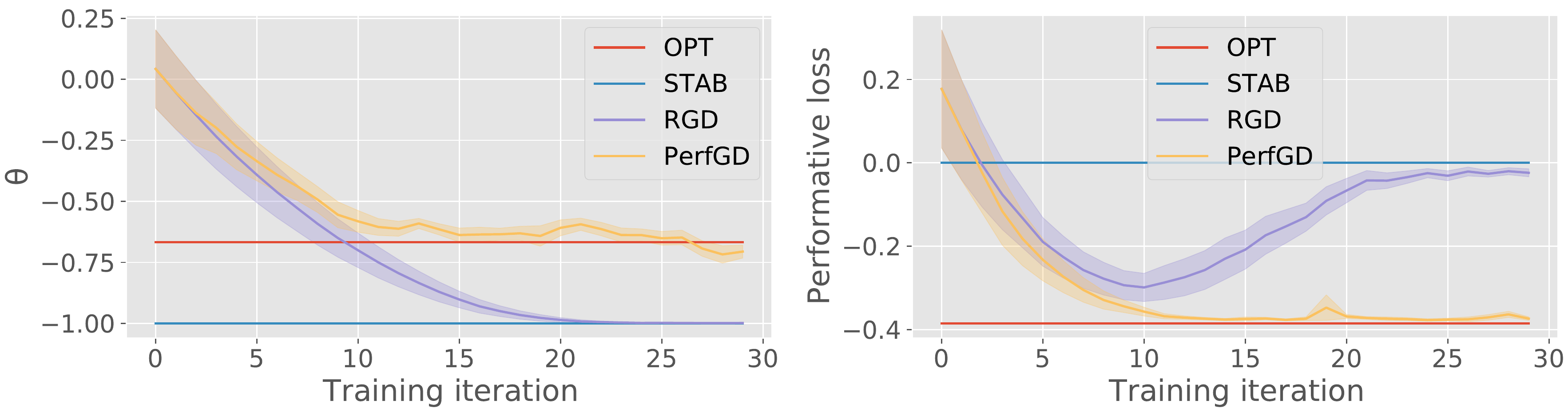}
\caption{PerfGD vs. RGD for a modified version of the toy example introduced in Section \ref{sec: rrm bad}. OPT denotes the performatively optimal point $\opt$, and STAB denotes the performatively stable point $\stab$. We set $\mu(\th) = \sqrt{a_1\th + a_0}$. Since the mean is nonlinear in $\th$, estimating $d\mu/d\th$ with finite differences is more challenging. In spite of this, PerfGD still converges to the optimal point.} \label{fig: sqrt}
\end{figure}

For the second example, we set $\D(\th) = \g\N(\mu_1(\th), \s_1^2) + (1-\g)\N(\mu_2(\th), \s_2^2)$. Here both of the means are linear in $\th$, i.e. $\mu_i(\th) = a_{i,1} \th + a_{i,0}$.
We apply PerfGD where the true cluster assignment for each point are known; in this case, PerfGD converges to $\opt$ exactly and achieves optimal performative loss. The results are shown in Figure \ref{fig: mixture} below.
\begin{figure}[h!]
\includegraphics[width=\linewidth]{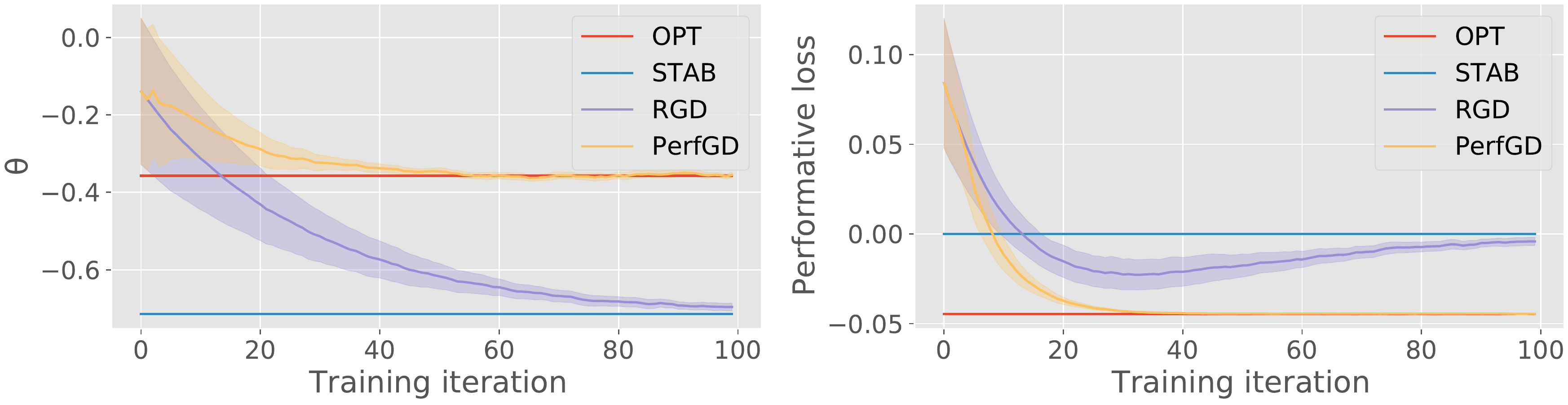}
\caption{PerfGD vs. RGD for a modified version of the toy example introduced in Section \ref{sec: rrm bad} with a mixture of Gaussians. We supply the cluster label for each point. PerfGD is again able to converge to the minimum, while RGD converges to a suboptimal point. 
} \label{fig: mixture}
\end{figure}

\subsection{Pricing} \label{sec: pricing}
We next examine a generalized version of the problem introduced in Section \ref{sec: rrm bad}. Let $\th$ denote a vector of prices for various goods which we, the distributor, set. A vector $z$ denotes a customer's demand for each good. Our goal is to maximize our expected revenue $\E_{\D(\th)}[\th^\T z]$. (In other words, we set the loss function $\ell(z; \th) = -\th^\T z$.) Assuming $\D(\th) = \N(\mu(\th), \Sigma)$, we can directly apply Algorithm \ref{alg: general PerfGD} with the functions $\hat{\mu}$ and $\hnb \L_2$ defined at the beginning of the section to compute the optimal prices.

\paragraph{Experiments}
For this experiment, we work in a higher dimensional setting with $d = 5$. We define $\Theta = [0, 5]^d$ and $\mu(\th) = \mu_0 - \e \th$. (That is, the mean demand for each good decreases linearly as the price increases.) 

Our results are shown in Figure \ref{fig: pricing}. For this case, we can compute $\opt$ and $\stab$ analytically. The performative revenue for each of these points is shown on the right side of the figure. As expected, PerfGD converges smoothly to the optimal prices, while RGD converges to the only fixed point which produces suboptimal revenue. In this case, RRM (not shown) stays fixed at $\rrm = [5, 5, \ldots, 5]^\T$.

\begin{figure}[h!]
\includegraphics[width=\linewidth]{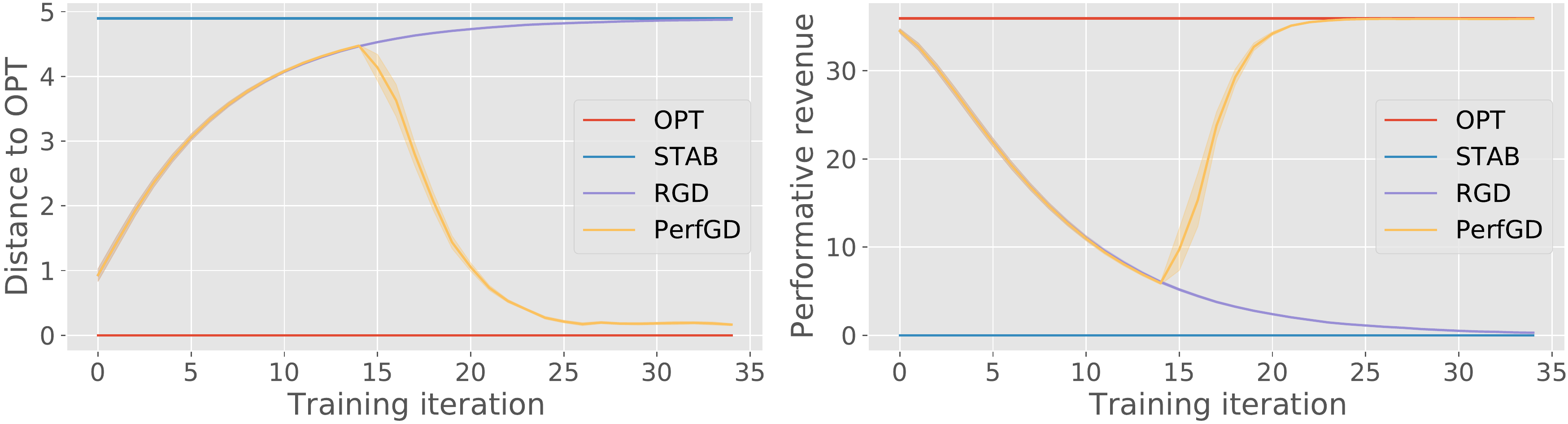}
\caption{PerfGD vs. RGD for performative pricing. By taking into account the change in distribution, PerfGD converges to a set of prices which yields higher revenue than RGD. RRM (not shown) stays fixed at $\rrm = 5 \cdot \textbf{1}$, i.e. the vector with all entries equal to 5. Note that PerfGD follows RGD for the first several steps as part of the initialization phase. After this phase, the accurate estimate for the second part of the performative gradient allows PerfGD to reverse trajectory towards $\opt$.} \label{fig: pricing}
\end{figure}

\subsection{Binary classification} \label{sec: classification}
Suppose our goal is to predict a label $y\in\{0, 1\}$ using features $x\in \R^d$. We assume that the label $y \sim \Bern(\g)$, and that $x | y \sim \N(\mu^y(\th), \Sigma_y)$. The performative loss can then be written as
\begin{equation} \label{eq: classification}
    \L(\th) = (1-\g)\E_{\N(\mu^0(\th), \Sigma_0)}[\ell(x, 0; \th)] + \g \E_{\N(\mu^1(\th), \Sigma_1)}[\ell(x, 1; \th)]
\end{equation}
We can apply the general PerfGD method to each of the terms in (\ref{eq: classification}) to obtain an approximate stochastic gradient. (We treat the features of the data with label $y=0$ as the dataset for the first term, and the features of the data with label $y=1$ as the dataset for the second term.)

\paragraph{Experiments}
Here we work with a synthetic model of the spam classification example. We will classify emails with a logistic model, and we will allow a bias term. (That is, our model parameters $\th = (\th_0, \th_1)^\T \in \R^2$. Given a real-valued feature $x$, our model outputs $h_\th(x) = 1/(1 + e^{-\th_0 - \th_1 x})$.) We let the label $y = \I\{\textrm{email is spam}\}$. For this case we assume that the distribution of the feature given the label is the performative aspect of the distribution map: spammers will try to alter their emails to slip past the spam filter, while people who use email normally will not alter their behavior according to the spam filter. To this end, we suppose that
$$x | y = 0, \th \sim \N(\mu_0, \s_0^2), \hspace{.25in} x | y = 1, \th \sim \N(f(\th), \s_1^2).$$
We note that assuming Gaussian features is in fact a realistic assumption in this case. Indeed, \cite{Li2020} shows that state-of-the-art performance on various NLP tasks can be achieved by transforming standard BERT embeddings so that they look like a sample from an isotropic Gaussian.

For this experiment, we set $f(\th) = \mu_1 - \e \th_1$. Such a distribution map arises from the strategic classification setting described in \cite{Perdomo2020} in which the spammers optimize a non-spam classification utility minus a quadratic cost for changing their features. We use ridge-regularized cross-entropy loss for $\ell$.

Our results are shown in Figure \ref{fig: classification}. The improved estimate of the performative gradient given by PerfGD results in roughly a $9\%$ reduction in the performative loss over RGD. In this case, RRM (not shown) oscillates between two values of $\th$ which both give significantly higher performative loss than either RGD or PerfGD.

\begin{figure}[h!]
\begin{center}
\includegraphics[width=\linewidth]{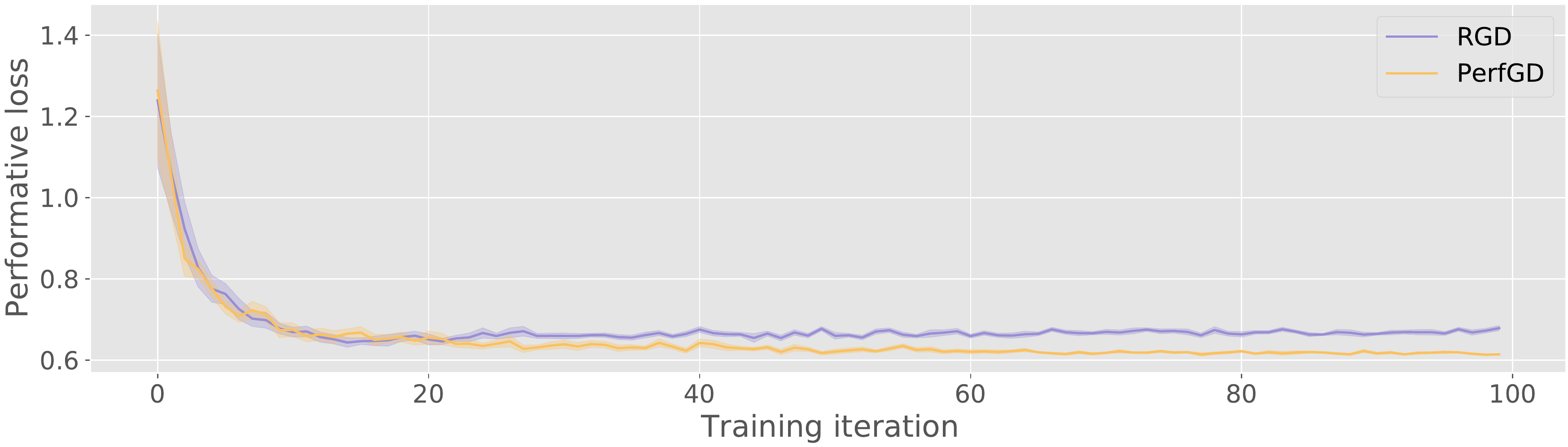}
\caption{PerfGD vs. RGD for performative logistic regression. By taking into account the change in distribution, PerfGD is able to achieve a lower performative loss than RGD.} \label{fig: classification}
\end{center}
\end{figure}



\subsection{Regression} \label{sec: regression}
This setting is essentially a generalized version of the performative mean estimation problem in \cite{Perdomo2020}. For simplicity, assume that the marginal distribution of $x$ is independent of $\th$. Assuming that $y|x \sim \N(\mu(x, \th), \s^2)$, the performative loss becomes
\begin{equation} \label{eq: regression}
    \L(\th) = \E_x[\E_{\N(\mu(x, \th), \s^2)}[\ell(x, y; \th)]].
\end{equation}
The inner expectation has the required form to apply PerfGD. However, since $x$ takes continuous values, we will in general have only one sample to approximate the inner expectation in (\ref{eq: regression}), leading to heavily biased or innacurate estimates for the required quantities in (\ref{eq: general PerfGD grad}). This leaves us with two options: we can either use techniques for debiasing the required quantities and apply PerfGD directly, or we can use a reparameterization trick and a modified version of PerfGD. Here we present the latter approach.

We assume that the response $y$ follows a linear model, i.e. $y = \beta(\th)^\T x + \e$, $\e \sim \N(0, \s^2)$. The performative loss can then be written as
\begin{equation}
    \L(\th) = \E_{x,\e}[\ell(x, \b(\th)^\T x + \e; \th)].
\end{equation}
Since we have removed the dependence of the distribution on $\th$, we can easily compute the gradient:
\begin{equation*}
    \nb \L(\th) = \E_{\D(\th)}[\nb_\th \ell(x, y; \th)] + \E_{\D(\th)}\left[\frac{\partial \ell}{\partial y} \frac{d\b}{d\th}^\T x \right].
\end{equation*}
We can first estimate $\b$ via e.g. regularized ordinary least squares, then estimate $d\b/d\th$ via finite differences as in the general setting (\ref{eq: general PerfGD grad}): $\frac{d\b}{d\th} \approx \Delta \b (\Delta \th)^{\dagger}$.

\paragraph{Experiments}
For simplicity, we use one-dimensional linear regression parameters $\th \in \R$. The feature $x$ is drawn from a fixed distribution $x\sim \N(\mu_x, \s_x^2)$, and the performative coefficient $\b(\th)$ of $y|x$ has the form $\b(\th) = a_0 + a_1\th$. We use ridge-regularized squared loss for $\ell$.


Our results are summarized in Figure \ref{fig: regression}. In this case, there is a large gap between $\opt$ and $\stab$. As expected, PerfGD converges smoothly to $\opt$, while in this case both RGD and RRM converge to $\stab$. The improvement of PerfGD over RGD and RRM results in a factor of more than an order of magnitude in reduction of the performative loss.
\begin{figure}[h!]
\includegraphics[width=\linewidth]{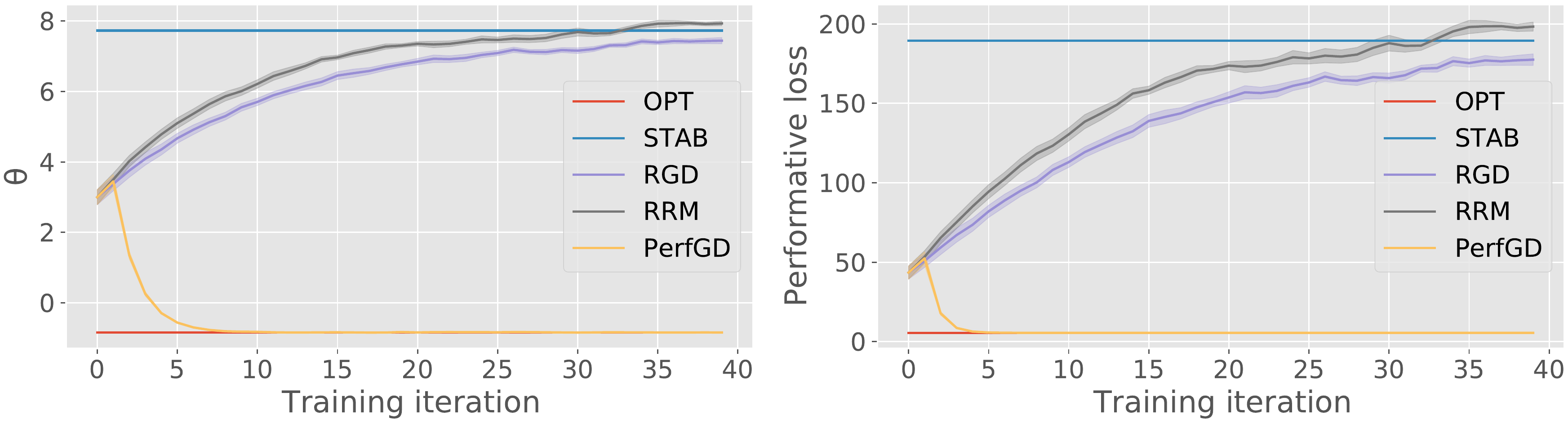}
\caption{PerfGD vs. RGD for performative linear regression. \textbf{Top:} Model parameters
  vs. training iteration. \textbf{Bottom:} Performative loss vs. training iteration.  As expected,
  RGD converges to the performatively stable point, but in this case the stable point is very far
  from the performative optimum. PerfGD converges to OPT and incurs a much lower performative loss
  than RGD.} \label{fig: regression}
\end{figure}

\section{Conclusion} \label{sec: conclusion}
In this paper, we addressed the setting of modeling when the data distribution reacts to the model's
parameters, i.e. performative distribution shift. We verified that existing algorithms meant to
address this setting in general converge to a suboptimal point in terms of the performative loss. We
then introduced a new algorithm, PerfGD, which computes a more accurate estimate for the
performative gradient under some parametric assumptions on the performative distribution. We proved
theoretical results on the accuracy of our gradient estimate as well as the convergence of the
method, and confirmed via several empirical examples that PerfGD outperforms existing algorithms such
as repeated gradient descent and repeated risk minimization. The accuracy and iteration requirement
are both practically feasible, as many ML systems have regular updates every few days.

Finally, we suggest directions for further research. A natural direction for future work is the extension of our methods to nonparametric distributions. Another direction which may prove fruitful is to improve the estimation of the derivative $df/d\th$. Finally, methods specifically tailored to deal with high-dimensional data are also of interest.


\newpage
\nocite{*}
\bibliography{pgd}
\bibliographystyle{alpha}

\newpage
\appendix
Throughout the following proofs, we use $\O(\cdot)$ to denote the leading order behavior of various quantities as $T$ (the total number of steps taken by the method) becomes large and $\d$ (the size of the error in the estimate for $f$) becomes small. For simplicity, all proofs are in one dimension.

\section{\texorpdfstring{Bounding the error of $\hfp$}{}}
Before we prove Lemma \ref{thm: gradient error bound} (bounding the error of the full performative gradient), we must first bound the error of our approximation to $f'$. Let $f_t = f(\th_t)$, $f'_t = f'(\th_t)$, and define $\hft$ and $\hfpt$ similarly.

\begin{lemma} \label{thm: f' error bound}
Under the assumptions of Section \ref{sec: theory}, we have $|\hfpt - f'_t| = \O\left(\frac{\d}{g}\frac1\n + MG\n\right)$.
\end{lemma}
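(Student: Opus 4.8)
The plan is to unfold the $H=1$ finite-difference estimator and split its error into a deterministic Taylor (bias) term and an estimation-noise term, then control each using the step-length bounds that follow from condition \ref{cond: perf grad}. With $H=1$ the estimator in Algorithm \ref{alg: general PerfGD} reduces to $\hfpt = (\hft - \hf_{t-1})/(\th_t - \th_{t-1})$, so writing $\hft = f_t + \e_t$ and $\hf_{t-1} = f_{t-1} + \e_{t-1}$ I would decompose
\begin{equation*}
\hfpt - f'_t = \left(\frac{f_t - f_{t-1}}{\th_t - \th_{t-1}} - f'_t\right) + \frac{\e_t - \e_{t-1}}{\th_t - \th_{t-1}}.
\end{equation*}

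First I would handle the bias term. By Taylor's theorem there is a point $\xi$ between $\th_{t-1}$ and $\th_t$ with $f_{t-1} = f_t + f'_t(\th_{t-1} - \th_t) + \tfrac12 f''(\xi)(\th_{t-1} - \th_t)^2$, so dividing through shows the first parenthesized term equals $-\tfrac12 f''(\xi)(\th_t - \th_{t-1})$, whose magnitude is at most $\tfrac{M}{2}|\th_t - \th_{t-1}|$ by the bound $|f''| \leq M$ from condition \ref{cond: f derivs}. For the noise term, condition \ref{cond: f err} gives $|\e_t|, |\e_{t-1}| \leq \d$, so its magnitude is at most $2\d/|\th_t - \th_{t-1}|$.

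These two bounds pull in opposite directions in $|\th_t - \th_{t-1}|$, so the crux is to control the step length both above and below. Since $\th_t = \th_{t-1} - \n\hnb\L_{t-1}$ in the unconstrained one-dimensional setting of Section \ref{sec: theory} (where $\th \in \R$), we have $|\th_t - \th_{t-1}| = \n|\hnb\L_{t-1}|$, and condition \ref{cond: perf grad} then yields $\n g \leq |\th_t - \th_{t-1}| \leq \n G$. Plugging the upper bound into the bias term and the lower bound into the noise term gives
\begin{equation*}
|\hfpt - f'_t| \leq \frac{MG\n}{2} + \frac{2\d}{\n g} = \O\left(MG\n + \frac{\d}{g}\frac1\n\right),
\end{equation*}
which is the claim.

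I expect the main obstacle to be the lower bound on the step length. The Taylor bias shrinks as $\n \to 0$, but the noise term is inversely proportional to the step length, so a step that is too short amplifies the estimation error in $\hf$; this is exactly why the lower bound $|\hnb\L| \geq g$ in condition \ref{cond: perf grad} is indispensable (and why it doubles as a stopping criterion). In a genuinely constrained setting one would additionally have to verify that the projection $\mathrm{proj}_\Th$ does not collapse the step --- e.g. by arguing the iterates remain in the interior of $\Th$ until termination --- since an active constraint could drive $|\th_t - \th_{t-1}|$ toward zero and destroy the noise bound; here this is avoided because the theoretical analysis is carried out over $\Th = \R$.
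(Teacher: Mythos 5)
Your proof is correct and follows essentially the same route as the paper's: the same decomposition into a Taylor bias term and an estimation-noise term, the same use of $\n g \leq |\th_{t+1}-\th_t| \leq \n G$ from the bounds on $\hnb\L$, and the same final bound $\tfrac{MG\n}{2} + \tfrac{2\d}{g}\tfrac1\n$. The only differences are cosmetic (backward versus forward difference indexing, and your added remark about the projection, which the paper implicitly sidesteps by working over $\Th = \R$).
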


\begin{proof}
By definition, we have
\begin{equation} \label{eq: taylor for fhat'}
    \hfpt = \frac{\hf_{t+1} - \hft}{\th_{t+1} - \th_t} = \frac{f_{t+1} - f_t}{\th_{t+1} - \th_t} + \frac{\e_{t+1} - \e_t}{\th_{t+1} - \th_t}. 
\end{equation}
By Taylor's theorem, we have
\begin{equation} \label{eq: taylor for f'}
    f_{t+1} = f_t + f'_t\cdot(\th_{t+1} - \th_t) + \frac12 f''(\xi) (\th_{t+1} - \th_t)^2 \hspace{.1in} \Longrightarrow \hspace{.1in} f'_t = \frac{f_{t+1} - f_t}{\th_{t+1} - \th_t} + \frac12 f''(\xi)(\th_{t+1} - \th_t),
\end{equation}
where $\xi$ is some number between $\th_t$ and $\th_{t+1}$. Next, note that since $\th_{t+1} = \th_t - \n \hnb\L_t$ and $g \leq |\hnb \L_t| \leq G$, we have $\n g \leq |\th_{t+1} - \th_t| \leq \n G$. Using this fact and combining equations (\ref{eq: taylor for fhat'}) and (\ref{eq: taylor for f'}), we find that
\begin{align*}
    |\hfpt - f'_t| &\leq \frac{|\e_{t+1}| + |\e_t|}{\n g} + \frac12 |f''(\xi)|\n G \\[10pt]
    &\leq \frac{2\d}{g}\frac1\n + \frac{MG}{2} \n 
\end{align*}
where we have also used the assumption that $|f''(\xi)| \leq M$. This is the desired bound.
\end{proof}

\section{Proof of Lemma \ref{thm: gradient error bound}}

\begin{proof}
We write $|\hnb \L_t - \nb \L_t| \leq |\hnb_1 \L_t - \nb_1 \L_t| + |\hnb_2 \L - \nb_2 \L_t|$ and bound each term on the right-hand side separately. We begin by bounding the error on $\nb_1\L$. We have
\begin{align}
    |\hnb_1 \L_t - \nb_1 \L_t| & \leq \int |\nb \ell(z; \th)| |p(z; \hft) - p(z; f_t)| \, dz \nonumber \\
    &\leq \lmax \int |p(z; \hft) - p(z; f_t)| \, dz \nonumber \\
    &\leq \lmax \bigg(\underbrace{\int_{|z - f_t| \leq R} |p(z; \hft) - p(z; f_t)| \, dz}_{\mathrm{(A)}} + \underbrace{\int_{|z - f_t| > R} |p(z; \hft)| \, dz}_{\mathrm{(B)}} + \underbrace{\int_{|z - f_t| > R} |p(z; f_t)| \, dz}_{\mathrm{(C)}}\bigg), \label{eq: perf grad 1 bound}
\end{align}
where for simplicity we assume that $\lmax \geq |\nb \ell(z; \th)|$ is also an upper bound on the derivative of the point loss, and for any $R > 0$.

To bound (A), we bound the Lipschitz constant of $p$ in its second argument. It suffices to bound $\partial_2 p$. Observe that
$$ \partial_2 p(z; w) = c (z - w)e^{-\frac{1}{2\s^2} (z-w)^2}. $$
Letting $x = z-w$ and $\alpha = \frac{1}{2\s^2}$, we want to bound the maximum of $x e^{-\alpha x^2}$. Taking the derivative with respect to $x$, this has critical points at $x = \pm \frac{1}{\sqrt{2\alpha}}$. Since $|\partial_2 p(z; w)| \rightarrow 0$ as $w\rightarrow \pm\infty$ for any $z$, these critical points are global maxima for $|\partial_2 p|$. Thus $\max |\partial_2 p| = \O(1)$ and $p$ is $\O(1)$-Lipschitz in its second argument. It follows that
\begin{equation*}
    \mathrm{(A)} \leq \int c |\hft - f_t| \, dz = \O(R\d).
\end{equation*}

To bound (B), oberserve that
\begin{align*}
    \mathrm{(B)} &\leq \int_{|z - \hft| + |\hft - f_t| > R} |p(z; \hft)|\, dz \\
    &\leq \int_{|z - \hft| > R - \d} |p(z; \hft)|\, dz \\
    &= \P_{\N(\hft, \s^2)}(|z - \hft| > R - \d) \\
    &\leq 2e^{-(R - \d)^2/2\s^2}
\end{align*}
for any $R \geq \d$. A similar calculation shows that (C) $\leq 2e^{-R^2/2\s^2} \leq 2e^{-(R-\d)^2/2s^2}$ for $R \geq \d$. Thus
$$\mathrm{(A)} + \mathrm{(B)} + \mathrm{(C)} = \O\left(R\d + \exp\left(-\frac{(R-\d)^2}{2\s^2}\right)\right)$$
for any $R \geq \d$. Setting $R = \d + \sqrt{2\s^2 \log\frac1\d}$ and substituting our bound back into (\ref{eq: perf grad 1 bound}), we obtain
\begin{equation} \label{eq: final grad1 bound}
    |\hnb_1 \L_t - \nb_1 \L_t| = \O\left(\lmax\left(\d \sqrt{\log\frac1\d}\right)\right).
\end{equation}

Next we bound the error $|\hnb_2 \L_t - \nb_2\L_t|$. We have
\begin{align}
    |\hnb_2 \L_t - \nb_2 \L_t| &= \left|\int \ell(z; \th_t) \partial_2 p(z; \hft) \hfpt\, dz - \int \ell(z; \th_t) \partial_2 p(z; f_t) f'_t \, dz \right| \nonumber \\[10pt]
    &\leq \underbrace{\int |\ell(z; \th_t)| |\partial_2 p(z; \hft)| |\hfpt - f'_t| \, dz}_{\mathrm{(I)}} + \underbrace{\int |\ell(z; \th_t)| |\partial_2 p(z; \hft) - \partial_2 p(z; f_t)| |f'_t|\, dz}_{\mathrm{(II)}}. \label{eq: bounding performative grad err 1}
\end{align}
We proceed to bound the terms (I) and (II) separately.

The bound for (I) is straightforward. Recall that $|\ell(z; \th_t)| \leq \lmax$ and $\hfpt$ and $f'_t$ are independent of $z$, so we have
\begin{equation*}
    \mathrm{(I)} \leq \lmax |\hfpt - f'_t| \int |\partial_2 p(z; \hft)| \, dz.
\end{equation*}
Since $p(z; \hft)$ is the pdf for a Gaussian with mean $\hft$ and variance $\s^2$, a standard computation reveals that $\int |\partial_2 p(z; \hft)|\, dz = \sqrt{\frac{2}{\pi\s^2}} = \O(1)$. Using the bound on $|\hfpt - f'_t|$ from Lemma \ref{thm: f' error bound}, we have
\begin{equation} \label{eq: bound on (I)}
    \mathrm{(I)} = \O\left(\lmax\left(MG \n + \frac{\d}{g}\frac1\n \right)\right).
\end{equation}

Next, we bound (II). First, since $|\ell(z; \th_t)|\leq \lmax$ and $|f'_t| \leq F$, we have
\begin{equation} \label{eq: bounding (II), 1}
    \mathrm{(II)} \leq \lmax F \int |\partial_2 p(z; \hft) - \partial_2 p(z; f_t)|\, dz
\end{equation}
so it suffices to bound the integrand in (\ref{eq: bounding (II), 1}).

For any $R \geq \d$, we have
\begin{equation*}
    \int |\partial_2 p(z; \hft) - \partial_2 p(z; f_t)|\, dz = \underbrace{\int_{|z - f_t| \leq R} |\partial_2 p(z; \hft) - \partial_2 p(z; f_t)|\, dz}_{\mathrm{(i)}} + \underbrace{\int_{|z - f_t| > R} |\partial_2 p(z; \hft) - \partial_2 p(z; f_t)|\, dz}_{\mathrm{(ii)}}.
\end{equation*}
To bound (i), it suffices to bound the Lipschitz constant of $\partial_2 p(z; w)$ in the second variable (if one exists). We can do this by bounding $|\partial_2^2 p|$. A direct computation shows that
\begin{equation} \label{eq: lipschitz constant of d2p}
    \partial_2^2 p(z; w) = \frac{1}{\s^2 \sqrt{2\pi\s^2}}e^{-\frac{1}{2\s^2}(z-w)^2}\left(\frac{1}{\s^2}(z-w)^2 - 1\right).
\end{equation}
Let $\alpha = \s^{-2}$ and $x = (z-w)^2$. Bounding (\ref{eq: lipschitz constant of d2p}) is equivalent to upper bounding an expression of the form $e^{-\frac{\alpha}{2} x}(\alpha x - 1)$ over $x \geq 0$. Taking a derivative with respect to $x$ shows that the only critical point is at $x = 2e^{-3/2}$; the only other point to check is the boundary point $x=0$. Checking both of these manually shows that the absolute value is maximized at $x=0$, and we obtain the bound
\begin{equation*}
    |\partial_2^2 p(z; w)| \leq \frac{1}{\s^2 \sqrt{2\pi\s^2}} = \O(1),
\end{equation*}
i.e. $\partial_2 p(z; w)$ is $\O(1)$-Lipschitz in $w$. Applying this fact to (i), we have
\begin{equation} \label{eq: bound on (i)}
    \mathrm{(i)} \leq \int_{|z - f_t| \leq R} c|\hft - f_t| \, dz 
    = c \int_{|z - f_t| \leq R} |\e_t|\, dz 
    = \O(R\d).
\end{equation}
Next we turn our attention to (ii). We have
\begin{align}
    \mathrm{(ii)} &\leq \int_{|z - f_t| > R} |\partial_2 p(z; \hft)| \, dz + \int_{|z - f_t| > R} |\partial_2 p(z; f_t)|\, dz \nonumber \\[10pt]
    &\leq \int_{|z-(f_t + \e_t)| > R - |\e_t|} |\partial_2 p(z; \hft)| \, dz + \int_{|z - f_t| > R} |\partial_2 p(z; f_t)|\, dz \nonumber \\[10pt]
    &\leq \int_{|z-f'_t| > R - \d} |\partial_2 p(z; \hft)| \, dz + \int_{|z - f_t| > R} |\partial_2 p(z; f_t)|\, dz. \label{eq: bound on (ii) 1}
\end{align}
These inequalities follow from several applications of the triangle inequality and the bound $|\e_t| \leq \d$. Now since $p(z; w)$ is a Gaussian pdf, we have $\partial_2 p(z; w) = \frac{1}{\s^2} (z-w) p(z; w)$, and therefore
\begin{align}
    \int_{|z-w| > r} |\partial_2 p(z; w)| \, dz &= \int_{|z-w| > r} \frac{1}{\s^2}|z-w| p(z; w) \, dz \nonumber \\[10pt]
    &= \E_{\N(w, \s^2)}\left[ \I\{|z-w| \geq r|\} \s^{-2}|z - w|\right] \nonumber \\[10pt]
    &\leq \sqrt{ \E[\I\{|z-w| \geq r\}^2] \s^{-2}\E[|z-w|^2]} \label{eq: cauchy-schwarz} \\[10pt]
    &= \sqrt{\P(|z-w| \geq r)} \nonumber \\[10pt]
    &\leq \sqrt{2}e^{-\frac{r^2}{4\s^2}}, \label{eq: gaussian tail bound}
\end{align}
where (\ref{eq: cauchy-schwarz}) follows from the Cauchy-Schwarz inequality and (\ref{eq: gaussian tail bound}) follows from a standard Gaussian tail bound. Applying (\ref{eq: gaussian tail bound}) to (\ref{eq: bound on (ii) 1}), we obtain
\begin{align}
    \mathrm{(ii)} &\leq c\left(\exp\left(-\frac{(R - \d)^2}{4\s^2}\right) + \exp\left(-\frac{R^2}{4\s^2}\right)\right) \nonumber \\[10pt]
    &= \O\left(\exp\left(-\frac{(R-\d)^2}{4\s^2}\right) \right) \label{eq: bound on (ii) 2}
\end{align}
for any $R \geq \d$. Combining the bound (\ref{eq: bound on (i)}) on (i) and (\ref{eq: bound on (ii) 2}) on (ii) with (\ref{eq: bounding (II), 1}), we have
\begin{equation} \label{eq: bounding (II), 2}
    \mathrm{(II)} = \O\left(\lmax F\left[R\d + \exp\left\{-\frac{(R - \d)^2}{4\s^2}\right\}\right]\right).
\end{equation}
If we take $R = \d + \sqrt{4\s^2 \log(1/\d)}$ and substitute into (\ref{eq: bounding (II), 2}), we obtain
\begin{equation} \label{eq: bound on (II)}
    \mathrm{(II)} = \O\left( \lmax F \d \sqrt{\log(1/\d)}\right).
\end{equation}
We now substitute our bounds on (I) and (II) into (\ref{eq: bounding performative grad err 1}), which yields
\begin{equation} \label{eq: final grad2 bound}
    |\nb_2 \L - \hnb_2 \L| \leq \O\left(\lmax\left[MG\n + \frac{\d}{g}\frac1\n + F\d\sqrt{\log(1/\d)}\right]\right).
\end{equation}
To conlude, observe that the bound on the error of $\nb_1\L_t$ in (\ref{eq: final grad1 bound}) can be completely absorbed into (\ref{eq: final grad2 bound}), and we obtain the desired result.
\end{proof}

\section{Proof of Theorem \ref{thm: convergence}}
\begin{proof}
To simplify notation, we will let $L = \lip$; this should not be confused with the decoupled performative loss function $L(\th_1, \th_2)$ defined in Section \ref{sec: notation}. Let $\L_t = \L(\th_t)$ and let $E_t = \hnb \L_t - \nb \L_t$. Since $\L$ is $L$-smooth and convex, we have the standard inequality
\begin{equation} \label{eq: L smooth}
    \L_{t+1} \leq \L_t + \nb \L_t \cdot (\th_{t+1} - \th_t) + \frac{L}{2}|\th_{t+1} - \th_t|^2.
\end{equation}
Since $\th_{t+1} - \th_t = \n \hnb \L_t$, we can rewrite (\ref{eq: L smooth}):
\begin{equation} \label{eq: L smooth 2}
    \L_{t+1} \leq \L_t + \n(|\nb \L_t||E_t| - |\nb \L_t|^2) + \n^2 L (|\nb \L_t|^2 + |E_t|^2)
\end{equation}
Rearranging and using the fact that $|\nb \L_t| \leq G$, we have
\begin{equation} \label{eq: grad norm bound}
    (\n - \n^2 L) |\nb\L_t|^2 \leq \L_t - \L_{t+1} + \n G |E_t| + \n^2 L |E_t|^2.
\end{equation}
If we sum both sides of (\ref{eq: grad norm bound}) from $t=1$ to $T$, we find that
\begin{equation} \label{eq: telescope}
    T \min_{1\leq t \leq T} |\nb \L_t|^2 \leq \sum_{t=1}^T |\nb \L_t|^2 \leq \frac{\L_1 - \L_{T+1} + \n G \sum_{t=1}^T |E_t| + \n^2 L \sum_{t=1}^T |E_t|^2}{\n - L\n^2}.
\end{equation}
Note that with $\n = \sqrt{\frac{1}{MG^2 T} + \frac{\d}{MGg}}$ as specified by the theorem, we have $\n^2 = o(\n)$. Furthermore, by Lemma \ref{thm: gradient error bound}, we have
\begin{equation} \label{eq: bound on Et with specific eta}|E_t| = \O\left( \lmax \sqrt{\frac{M}{T} + \frac{MG\d}{g}} \right) \equiv \mathbf{E}.\end{equation}
(In obtaining the above bound, we have assumed WLOG that $G \geq 1$.) Note that since $\mathbf{E} = o(1)$, we have $\mathbf{E}^2 = o(\mathbf{E})$. Lastly, since $\L_t = \E_{p(z; \th_t)}[\ell(z; \th_t)]$ we have $|\L_t| \leq \lmax$ for all $t$. Applying these facts to \eqref{eq: telescope}, we have
\begin{align}
    \min_{1\leq t \leq T} |\nb \L_t|^2 &= \O\left( \frac{ \lmax + \n G T \mathbf{E} + \n^2 L T \mathbf{E}^2}{T\n} \right) \nonumber \\[10pt]
    &= \O\left(\frac{\lmax}{T\n} + G \lmax \left[MG\n + \frac{\d}{g} \frac1\n\right]\right) \nonumber \\[10pt]
    &= \O\left( \lmax \sqrt{\frac{MG^2}{T} + \frac{MG^3 \d}{g}}\right) \label{eq: final perfgd bound}
\end{align}
where the last equation follows from our choice of $\n$.

\blockcomment{
Let us now examine the bound on $|E_t|$ from Lemma \ref{thm: gradient error bound} more carefully. If we take step size $\n = \d^c$ for some $0<c<1$, then we have
\begin{equation*}
    |E_t| = \O\left(\lmax\left[MG\d^c + \frac{\d^{1-c}}{g} + F\d \log(1/\d)\right]\right) = o(1).
\end{equation*}
In particular, this means that have $|E_t|^2 = o(|E_t|)$. Furthermore, when $\n = \d^c$ we have $\n^2 = o(\n)$. Finally, observe that since $\L_t = \E_{p(z; \th_t)}[\ell(z; \th_t)]$ we have $|\L_t| \leq \lmax$ for all $t$. Thus, (\ref{eq: telescope}) can be simplified to
\begin{align}
    \min_{1\leq t \leq T} |\nb \L_t|^2 &= \O\left(\frac{\lmax}{T\d^c} + G \lmax[ M G \d^c + \frac{\d^{1-c}}{g} + F\d \log(1/\d)]\right) \nonumber \\[10pt]
    &= \O\left( \lmax \left[ \frac{1}{T\d^c} + MG^2\d^c + \frac{G\d^{1-c}}{g}\right]\right). \label{eq: final perfgd bound}
\end{align}
}

Lastly, recall that our bound on $|E_t|$ required that $|\hnb \L_t| \geq g$ for all $1\leq t \leq T$. If at any point we have $|\hnb \L_t| < g$, then we can terminate and return this iterate. But then we have
\begin{align}
    |\nb \L_t|^2 &\leq 2|\hnb \L_t|^2 + 2|E_t|^2 \nonumber  \\
    &\leq \O(g^2 + \mathbf{E}^2). \label{eq: small grad terminate}
\end{align}
Note that $\e_* \equiv \mathbf{E}^2 = \O\left(\lmax^2\left(\frac{M}{T} + \frac{MG\d}{g}\right)\right) = (T^{-1} + \d) \cdot \O(\textrm{poly}(\lmax, M, G, g^{-1})$ as specified in the statement of Theorem \ref{thm: convergence}. We can guarantee that PerfGD reaches at least the max of the two bounds (\ref{eq: final perfgd bound}) and (\ref{eq: small grad terminate}), yielding the desired result.

\blockcomment{
By the bound on $E_t$ from Lemma \ref{thm: gradient error bound}, we then have
\begin{align}
    |\nb \L_t|^2 &\leq 2|\hnb \L_t|^2 + 2|E_t|^2 \nonumber  \\
    &= \O(g^2 + \d^{c'}\cdot \mathrm{poly}(\lmax, g^{-1}, G, L, F, M)). \label{eq: small grad terminate}
\end{align}
}
\end{proof}

We remark that, for a given accuracy level $\d$, we should take a time horizon $T \propto \d^{-1}$. Increasing $T$ beyond this point will not cause the error bound from Theorem \ref{thm: convergence} to decay any further.

\section{Convergence of PerfGD with stochastic errors and general $H$} \label{sec: long horizon convergence}
When the errors on the estimate for $f$ are bounded and deterministic, we gain no advantage by increasing the length of the estimation horizon $H$. However, when the errors are centered and stochastic, the estimation horizon now plays a critical roll. Increasing $H$ allows for concentration of the errors, leading to overall better estimates for $f$. At the same time, increasing $H$ causes the deterministic bias from our finite difference approximations to increase. In the following section, we show how to balance these two factors and choose an optimal $H$. First, we state our main theorem.

\begin{theorem} \label{thm: long horizon perfgd bound}
With step size
$$\n = \frac{g^{2/3}}{M^{1/2} G^{5/3} \tau^{1/3} (\log \frac{T}{\g})^{1/6} T^{5/6}}$$
and estimation horizon
$$H = \frac{\tau^{2/5} (\log \frac{T}{\g})^{1/5}}{M^{2/5} g^{4/5}} \n^{-4/5}$$
the iterates of PerfGD satisfy
$$\min_{1 \leq t \leq T} |\nb\L_t|^2 = \max\left\{ \O\left(\lmax \left[ \frac{1}{T^{1/6}} \cdot \frac{M^{1/2} G^{5/3} \tau^{1/3} (\log \frac{T}{\g})^{1/6}}{g^{2/3}}\right]\right), \O(g^2 + \tau^{c_1} T^{-c_2}\cdot \mathrm{poly}(M, G, \g^{-1}, g^{-1}) )\right\}$$
with probability at least $1-\O(\g)$ as $\tau \rightarrow 0$ and $T \rightarrow \infty$ and for some positive constants $c_1, c_2$.
\end{theorem}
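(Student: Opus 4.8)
The plan is to reuse the telescoping descent argument of Theorem~\ref{thm: convergence} essentially verbatim, replacing only the per-step gradient-error bound $|E_t| = |\hnb\L_t - \nb\L_t|$ by a version valid for centered stochastic errors on $\hf$ and a general horizon $H$. The chain \eqref{eq: L smooth}--\eqref{eq: telescope} never used that the error was deterministic; it only used $|\nb\L_t| \le G$, $\lip$-smoothness, convexity, and a uniform bound $|E_t| \le \mathbf{E}$. So once I have a high-probability uniform bound on $\mathbf{E}$, summing \eqref{eq: grad norm bound} gives $\min_t|\nb\L_t|^2 = \O(\lmax/(T\n) + G\,\mathbf{E} + \lip\n\,\mathbf{E}^2)$ exactly as before, and the stopping-criterion branch producing the $\O(g^2 + \e_*)$ term in the max is structurally unchanged.

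The real work is a stochastic analogue of Lemma~\ref{thm: f' error bound}. Writing the least-squares finite difference as $\hfpt = \sum_i \Delta f_i \Delta\th_i / \sum_i \Delta\th_i^2$ and expanding $\Delta f_i = f'_t\Delta\th_i + \tfrac12 f''(\xi_i)\Delta\th_i^2 + (\e_{t-H+i} - \e_t)$ by Taylor's theorem, the error of $\hfpt$ splits into a deterministic bias and a stochastic term. The bias is at most $\tfrac{M}{2}\max_i|\Delta\th_i|$; since each step satisfies $\n g \le |\th_{s+1}-\th_s| \le \n G$, the horizon spans $\O(H\n G)$, giving bias $\O(MHG\n)$, which is increasing in $H$. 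The stochastic term is a weighted sum of centered errors divided by $\sum_i\Delta\th_i^2$; using $|\Delta\th_i| \gtrsim i\,\n g$ gives $\sum_i\Delta\th_i^2 \gtrsim H^3\n^2 g^2$, so a sub-Gaussian tail bound yields a noise contribution of order $\tau\sqrt{\log(T/\g)}/(H^{3/2}\n g)$, which is decreasing in $H$. I then substitute this bound into the proof of Lemma~\ref{thm: gradient error bound} in place of the $H=1$ estimate; since the purely distributional terms of that proof are untouched, the new $|E_t|$ is $\O(\lmax)$ times the sum of these bias and noise pieces.

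To make the bound hold simultaneously for all $1\le t\le T$, I apply the tail bound with failure probability $\g/T$ at each step and union bound, which produces the $\log(T/\g)$ factors and the overall probability $1-\O(\g)$. With a uniform $\mathbf{E}$ in hand I optimize in two stages: first choose $H$ to balance the bias $MHG\n$ against the noise $\tau\sqrt{\log(T/\g)}/(H^{3/2}\n g)$, which gives $H \propto \n^{-4/5}$ and a residual finite-difference error of order $\n^{1/5}$; then choose $\n$ to balance $\lmax/(T\n)$ against $G\lmax\,\n^{1/5}$, which gives $\n \propto T^{-5/6}$. Substituting these back reproduces the stated expressions for $\n$, $H$, and the leading term of $\min_t|\nb\L_t|^2$, while the early-termination branch $|\hnb\L_t| < g$ yields the second element of the max with $\e_* = \tau^{c_1}T^{-c_2}\cdot\mathrm{poly}(\cdots)$.

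The main obstacle is justifying the concentration of the least-squares noise. The design points $\Delta\th_i$ and the errors $\e_s$ are all measurable functions of the same random trajectory and hence dependent, so the naive variance computation is not immediately valid. I would handle this by conditioning on the filtration generated by the iterates, so that $\sum_i\Delta\th_i^2$ is fixed and $\sum_i\e_{t-H+i}\Delta\th_i$ becomes a martingale-type sum amenable to a conditional sub-Gaussian (Azuma-style) inequality. This also requires a trajectory-uniform lower bound on $\sum_i\Delta\th_i^2$, which is exactly where the assumption $|\hnb\L|\ge g$ is essential: it guarantees consecutive iterates are separated by at least $\n g$, so the finite-difference design is never degenerate. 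Carrying this lower bound carefully through \emph{both} the bias and the noise terms is what fixes the dependence of the final bound on $g$ and the precise values of $c_1,c_2$.
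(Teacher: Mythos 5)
Your overall architecture matches the paper's: you reuse the telescoping descent argument of Theorem~\ref{thm: convergence} with a new high-probability uniform bound on $|E_t|$, you split the least-squares finite difference into a deterministic bias of order $MGH\n$ and a stochastic term controlled by the lower bound $\sum_i(\th_{t-i}-\th_t)^2 \gtrsim H^3 g^2\n^2$, and you balance the two first in $H$ and then in $\n$, with a union bound over $t$ supplying the $\log(T/\g)$ factors and the early-termination branch giving the second element of the max. This is exactly the plan of Lemma~\ref{thm: long horizon f' error bound} and the proof of Theorem~\ref{thm: long horizon perfgd bound}.

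The gap is in your treatment of the noise term, which you correctly flag as the main obstacle but do not resolve. Conditioning on ``the filtration generated by the iterates'' does not make $\sum_i(\e_{t-i}-\e_t)(\th_{t-i}-\th_t)$ a martingale sum: $\th_t$ is a function of $\e_{t-1},\dots,\e_{t-H}$, so each weight $\th_{t-i}-\th_t$ depends on the very error $\e_{t-i}$ it multiplies (and $\e_t$ appears in every summand), so the increments are neither adapted in a usable order nor conditionally centered given the design; conditioning on the trajectory also changes the law of the errors. The paper sidesteps this with two devices absent from your argument: (i) Lemma~\ref{thm: subgaussian}, which shows that the product of a $\tau^2$-subgaussian variable with \emph{any} bounded variable, dependent or not, is $B^2\tau^2$-subgaussian, applied with the crude bound $|\th_{t-i}-\th_t|\le HG\n$; and (ii) an explicit sample-splitting modification of the algorithm (estimating $f_t$ once per partition of the time-$t$ sample) so that the $H$ summands become genuinely independent. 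It also invokes an additional assumption that the iterates converge monotonically to justify $|\th_{t-i}-\th_t|\ge i g\n$, which your lower bound on $\sum_i\Delta\th_i^2$ uses implicitly and which you should state. Finally, because the paper pays the worst-case factor $HG\n$ in the numerator rather than the variance-type $\bigl(\sum_i\Delta\th_i^2\bigr)^{1/2}$ your bound implicitly assumes, its noise term is $G\tau\sqrt{\log(T/\g)}/(g^2\n H^{3/2})$, larger than yours by a factor $G/g$; this factor is precisely what produces the exponents $G^{5/3}$ and $g^{2/3}$ in the stated $\n$, $H$, and final rate, so your bound as written would not reproduce those expressions (though the $T^{-1/6}$, $\tau^{1/3}$, and $(\log\frac{T}{\g})^{1/6}$ dependence would come out the same).
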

We remark briefly that we choose to analyze $\tau \rightarrow 0$ since if the estimates for $f_t$ are computed from random samples of increasing size, then we expect the variance of these estimates (measured by $\tau$) to decay to zero as the sample size $n \rightarrow \infty$. For instance, for estimating the mean of a Gaussian we will have $\tau^2 = \O(1/n)$.

The proof of Theorem \ref{thm: long horizon perfgd bound} follows from two key lemmas.

\begin{lemma} \label{thm: subgaussian}
If $X$ is $\tau^2$-subgaussian and $Y$ is any random variable with $|Y| \leq B$ w.p. 1, then $XY$ is $B^2 \tau^2$-subgaussian.
\end{lemma}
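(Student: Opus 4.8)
The plan is to work directly with the moment generating function (MGF) characterization of subgaussianity. Recall that $X$ being $\tau^2$-subgaussian means $\E[e^{\lambda X}] \leq e^{\lambda^2 \tau^2/2}$ for every $\lambda \in \R$, and the goal is to establish the analogous bound $\E[e^{\lambda XY}] \leq e^{\lambda^2 B^2 \tau^2/2}$, which is exactly the statement that $XY$ is $B^2\tau^2$-subgaussian. The inflation of the parameter from $\tau^2$ to $B^2\tau^2$ should come out of the proof by treating a factor of $Y$ as a rescaling of the MGF argument $\lambda$, combined with the uniform bound $|Y| \leq B$.

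The central step is a conditioning argument. Let $\mathcal{F}$ denote the $\sigma$-field with respect to which $Y$ is measurable (e.g.\ $\sigma(Y)$), and note that in the intended application—where $X$ is an estimation error for $f$ and $Y$ is a bounded loss or derivative factor—$X$ is conditionally $\tau^2$-subgaussian given $\mathcal{F}$; this holds automatically when $X$ and $Y$ are independent. Then by the tower property,
$$\E[e^{\lambda XY}] = \E\!\left[\E[e^{(\lambda Y) X}\mid \mathcal{F}]\right].$$
Given $\mathcal{F}$, the quantity $Y$ is a constant, so applying conditional subgaussianity of $X$ with effective coefficient $\lambda Y$ yields
$$\E[e^{(\lambda Y)X}\mid \mathcal{F}] \leq e^{(\lambda Y)^2 \tau^2/2} = e^{\lambda^2 Y^2 \tau^2/2} \leq e^{\lambda^2 B^2 \tau^2/2},$$
where the final inequality uses $|Y| \leq B$ together with monotonicity of the exponential. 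Since the right-hand side is a deterministic constant, taking the outer expectation over $\mathcal{F}$ preserves the bound and gives $\E[e^{\lambda XY}] \leq e^{\lambda^2 B^2 \tau^2/2}$, as desired.

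The main obstacle to watch is the dependence between $X$ and $Y$: conditioning on $Y$ can in principle alter the law of $X$, so the argument requires $X$ to remain $\tau^2$-subgaussian \emph{conditionally} on $\mathcal{F}$. This is immediate under independence and, more generally, fits the conditional/martingale structure in which the estimation errors arise, so I would state this conditional subgaussianity as the operative hypothesis. As a robust alternative that tolerates arbitrary dependence (at the cost of the sharp constant), one can instead argue at the level of tails: since $|XY| \leq B|X|$ pointwise, any tail bound $\P(|X| > t) \leq 2e^{-t^2/(2\tau^2)}$ transfers directly to $\P(|XY| > t) \leq \P(|X| > t/B) \leq 2e^{-t^2/(2B^2\tau^2)}$, which recovers $B^2\tau^2$-subgaussianity up to the usual constant factor relating the tail and MGF definitions. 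I expect the MGF conditioning proof to be the cleaner route for obtaining the exact parameter quoted in the lemma.
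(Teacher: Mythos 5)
Your argument is valid, but it proves a slightly different statement than the lemma as written, and the paper's own proof takes a genuinely different (and shorter) route that avoids the issue you flag. The paper works with the Orlicz-type definition of subgaussianity, namely that $Z$ is $s^2$-subgaussian when $\E\, e^{Z^2/s^2} \leq 2$. Under that definition the proof is a one-line pointwise domination: since $|Y| \leq B$ almost surely, $X^2Y^2/(B^2\tau^2) \leq X^2/\tau^2$ pointwise, so monotonicity of $\exp$ and the hypothesis on $X$ give $\E\, e^{X^2Y^2/(B^2\tau^2)} \leq \E\, e^{X^2/\tau^2} \leq 2$. No independence, conditioning, or centering enters anywhere, which is exactly why the paper remarks immediately after the lemma that ``the random variables involved need not be independent.'' This generality is not cosmetic: in the application (Lemma \ref{thm: long horizon f' error bound}), the bounded factor multiplying the subgaussian error involves the step differences $\th_{t-i} - \th_t$, which are themselves functions of past noisy gradient estimates and hence correlated with the errors $\e_t$.

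Your MGF-based conditioning argument, by contrast, requires $X$ to be $\tau^2$-subgaussian \emph{conditionally} on $\sigma(Y)$ --- a strictly stronger hypothesis than the lemma states, and one that can genuinely fail under adversarial dependence (e.g.\ $Y = \mathrm{sign}(X)$ makes $XY = |X|$, which is not mean-zero and so violates the centered MGF bound for small $\lambda$). You correctly identify this as the weak point and propose promoting conditional subgaussianity to a hypothesis, but that would not suffice for the paper's application without additional argument. Your tail-bound fallback, $\P(|XY| > t) \leq \P(|X| > t/B)$, does handle arbitrary dependence and is essentially the same pointwise-domination idea as the paper's proof, just routed through tails rather than the exponential moment; it recovers the stated parameter only up to the universal constants relating the two definitions. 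In short: adopt the $\E\, e^{Z^2/s^2} \leq 2$ definition and your ``robust alternative'' becomes the exact, constant-free proof.
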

A critical fact about this lemma is that the random variables involved need not be independent.

\begin{proof}
By definition, $Z$ is $s^2$-subgaussian if $\E e^{Z^2/s^2} \leq 2$. Observe that since the exponential function is monotonic, we have
$$\E e^{X^2 Y^2 / B^2 \tau^2} \leq \E e^{X^2 B^2 / B^2 \tau^2} = \E e^{X^2/\tau^2} \leq 2.$$
Thus $XY$ is $B^2\tau^2$-subgaussian.

\blockcomment{
For the second part of the lemma, we first assume that $H = 2^h$ and proceed by induction on $h$. The claim is trivially true with $h = 0$; indeed, we have $X_1$ is $2^0 \tau_1^2$-subgaussian. Now assume that $\sum_{i=1}^{2^h}X_i$ is $2^h\sum_{i=1}^{2^h} \tau_i^2$-subgaussian for any collection of $2^h$ subgaussian $X_i$. We use an equivalent characterization of subgaussian random variables: $Z$ is $s^2$-subgaussian if $\E e^{tZ} \leq e^{s^2 t^2 / 2}$. Thus we have
\begin{align*}
    \E \exp(t\sum_{i=1}^{2^{h+1}} X_i) &= \E \exp(t\sum_{i=1}^{2^h} X_i) \exp(t\sum_{i=2^h + 1}^{2^{h+1}} X_i) \\
    &\leq (\E \exp(2t\sum_{i=1}^{2^h} X_i) )^{1/2} (\E \exp(2t\sum_{i=2^h + 1}^{2^{h+1}} X_i))^{1/2} &\textrm{(Cauchy-Schwarz)} \\
    &\leq \exp(4t^2 2^h \sum_{i=1}^{2^h} \tau_i^2 / 2) \exp(4t^2 \sum_{i=2^h + 1}^{2^{h+1}} 2^h \tau_i^2 /2) &\textrm{(Inductive hypothesis)}\\
    &= \exp(t^2 2^{h+1} \sum_{i=1}^{2^{h+1}} \tau_i^2 / 2).
\end{align*}
Thus $\sum_{i=1}^{2^{h+1}} X_i$ is $2^{h+1}\sum_{i=1}^{2^{h+1}} \tau_i^2$-subgaussian and the result follows by induction. To get the result for general $H$, observe that $H$ can be brought to a power of 2 by adding at most $H$ 0-subgaussian random variables (which are just 0). Then the factor $2^h$ is at most $2H$, so we have $\sum_{i=1}^H X_i$ is $\O(H\sum_{i=1}^H \tau_i^2)$-subgaussian as desired.
}
\end{proof}

\begin{lemma} \label{thm: long horizon f' error bound}
We have $\hfpt = f_t' + b_t + e_t$, where $b_t$ is a deterministic bias term with $|b_t| = \O(MGH\n)$. Under the additional assumption that $\th_t$ converge monotonically, $e_t$ is $\O\left(\frac{G^2\tau^2}{H^3g^4\n^2}\right)$-subgaussian.
\end{lemma}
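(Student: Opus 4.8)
The plan is to specialize the pseudoinverse-based finite difference to one dimension, where $\Delta\th$ and $\Delta f$ are row vectors and $(\Delta\th)^\dagger = \Delta\th^\T/|\Delta\th|^2$, so that
\[
\hfpt = \frac{\sum_{s=t-H}^{t-1}(\hf_s - \hft)(\th_s - \th_t)}{\sum_{s=t-H}^{t-1}(\th_s - \th_t)^2}.
\]
Writing the stochastic estimate as $\hf_s = f_s + \e_s$ and expanding $\hf_s - \hft = (f_s - f_t) + (\e_s - \e_t)$ splits $\hfpt$ into a purely deterministic ratio, built from the $f_s - f_t$, and a purely stochastic ratio, built from the $\e_s - \e_t$. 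I would identify the first with $f'_t + b_t$ and the second with $e_t$, and bound each separately.

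For the bias $b_t$, I would reuse the Taylor expansion from Lemma \ref{thm: f' error bound}, $f_s - f_t = f'_t(\th_s - \th_t) + \tfrac12 f''(\xi_s)(\th_s - \th_t)^2$. The linear term reconstructs $f'_t$ exactly, leaving $b_t = \tfrac12 \sum_s f''(\xi_s)(\th_s-\th_t)^3 / \sum_s(\th_s-\th_t)^2$. The key trick to obtain the clean $\O(MGH\n)$ bound (rather than one carrying extra $G/g$ factors) is to pull a single factor $\max_s|\th_s-\th_t|$ out of the cubic numerator; the residual $(\th_s-\th_t)^2$ then cancels the denominator. Since $\th_{t+1}-\th_t = -\n\hnb\L_t$ with $|\hnb\L_t|\le G$, and since the iterates are assumed to move monotonically (so successive steps do not cancel), $\max_s|\th_s-\th_t| = |\th_{t-H}-\th_t| \le \n G H$; combined with $|f''|\le M$ this gives $|b_t| = \O(MGH\n)$.

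For the stochastic term $e_t = \sum_s (\e_s - \e_t)(\th_s-\th_t) / \sum_s (\th_s-\th_t)^2$, monotonicity is again essential: it upgrades the per-step lower bound $|\th_{s+1}-\th_s|\ge \n g$ into $|\th_s - \th_t| \ge \n g(t-s)$, hence $\sum_s(\th_s-\th_t)^2 \ge \n^2 g^2\sum_{k=1}^H k^2 = \Omega(\n^2 g^2 H^3)$. Together with $|\th_s-\th_t|\le \n G H$ this bounds each weight $c_s = (\th_s-\th_t)/\sum_r(\th_r-\th_t)^2$ by $\O(G/(\n g^2 H^2))$ with probability one. Each summand $c_s(\e_s-\e_t)$ is then a bounded quantity times an $\O(\tau^2)$-subgaussian error, so Lemma \ref{thm: subgaussian} makes it $\O(G^2\tau^2/(\n^2 g^4 H^4))$-subgaussian; summing the $H$ terms and applying the subgaussian sum bound yields the claimed proxy $\O(H\cdot G^2\tau^2/(\n^2 g^4 H^4)) = \O(G^2\tau^2/(\n^2 g^4 H^3))$.

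The hard part will be the stochastic term, and specifically its dependence structure. The weights $c_s$ are random and depend on the errors through the iterates, and the current error $\e_t$ is shared across all $H$ summands, so the terms are neither independent nor is $c_s$ predictable with respect to the natural filtration. This is precisely why Lemma \ref{thm: subgaussian} is stated without an independence hypothesis: it controls the subgaussian proxy of $c_s(\e_s-\e_t)$ using only the almost-sure bound on $c_s$, which the monotonicity assumption supplies by turning the random denominator into a deterministic lower bound. The delicate step I would need to justify carefully is that the combination of these $H$ correlated summands scales like $H$ (rather than $H^2$) times the per-term proxy, in particular checking that the shared-$\e_t$ contribution is absorbed into the stated bound rather than dominating it.
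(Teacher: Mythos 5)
Your setup, decomposition, and bias bound coincide with the paper's: the same least-squares form of the pseudoinverse, the same Taylor expansion, and the same trick of pulling a single factor $|\th_{t-i}-\th_t|\le HG\n$ out of the cubic numerator so that the remaining square cancels the denominator, giving $|b_t|=\O(MGH\n)$. The monotonicity-based lower bound $\sum_i(\th_{t-i}-\th_t)^2=\Omega(H^3g^2\n^2)$ is also exactly the paper's.

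The gap is precisely the step you flag at the end, and it is not a formality: as written, your argument does not reach the claimed $H^{-3}$ scaling. Two problems. First, the shared error: writing $e_t=\sum_i c_i\e_{t-i}-\bigl(\sum_i c_i\bigr)\e_t$ with your weights $c_i$, monotonicity gives $\bigl|\sum_i c_i\bigr|=\O\bigl(G/(Hg^2\n)\bigr)$ (the numerator sums to $\O(H^2G\n)$ against a denominator of order $H^3g^2\n^2$), so the $\e_t$ term alone is a single bounded coefficient times a single $\tau^2$-subgaussian and hence $\O\bigl(G^2\tau^2/(H^2g^4\n^2)\bigr)$-subgaussian --- a factor of $H$ worse than the claim. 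There is no concentration to exploit in that one term, so it genuinely dominates rather than being ``absorbed.'' Second, even for the $\sum_i c_i\e_{t-i}$ part, combining $H$ \emph{correlated} subgaussian summands (correlated because the weights depend on the error history) only yields a proxy of $\bigl(\sum_i s_i\bigr)^2 = H^2\cdot(\text{per-term proxy})$ in general; the linear-in-$H$ addition of proxies you invoke requires independence. The paper's resolution is a modification of the estimator that your proposal is missing: it splits the sample drawn at time $t$ into $H$ disjoint parts, producing independent estimates $\hat{f}_{t,i}$ of $f_t$, and replaces $\hat{f}_{t-i}-\hft$ by $\hat{f}_{t-i}-\hat{f}_{t,i}$ in the least-squares problem. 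The $H$ error terms $\e_{t-i}-\e_{t,i}$ are then genuinely independent, so their proxies add linearly; the paper also applies the boundedness lemma once to the entire sum (pulling out the almost-sure bound $\O\bigl(G/(H^2g^2\n)\bigr)$ on the weights) rather than term by term. Without some such device, the stated subgaussian proxy is not obtained.
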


\begin{proof}
The pseudoinverse used to compute $\hfpt$ is equivalent to solving the least-squares problem
\begin{equation} \label{eq: least sq for hfpt}
    \hfpt = \argmin_{\alpha} \frac12 \sum_{i=1}^H (\alpha (\th_{t-i} - \th_t) - (\hat{f}_{t-i} - \hft))^2 \hspace{.1in} \Longrightarrow \hspace{.1in} \hfpt = \frac{\sum_{i=1}^H (\hat{f}_{t-i} - \hft)(\th_{t-i} - \th_t)}{\sum_{i=1}^H (\th_{t-i} - \th_t)^2}.
\end{equation}
Writing $\hft = f_t + \e_t$ with $\e_t$ $\tau$-subgaussian, we can apply Taylor's theorem to rewrite
\begin{equation} \label{eq: rewrite Delta f}
    \hat{f}_{t-i} - \hft = f'_t(\th_{t-i} - \th_t) + \frac12 f''(\xi_i)(\th_{t-i} - \th_t)^2 + \e_{t-i} - \e_t.
\end{equation}
Using the explicit solution in (\ref{eq: least sq for hfpt}) and substituting (\ref{eq: rewrite Delta f}) for $\hat{f}_{t-i} - \hft$, we find that
\begin{equation*}
    |\hfpt - f'_t| \leq \underbrace{\frac{\frac12\sum_{i=1}^H |f''(\xi_i)||\th_{t-i} - \th_t|^3}{\sum_{i=1}^H (\th_{t-i} - \th_t)^2}}_{b_t} + \underbrace{\frac{\sum_{i=1}^H (\e_{t-i} - \e_t)(\th_{t-i} - \th_t)}{\sum_{i=1}^H (\th_{t-i} - \th_t)^2}}_{e_t}.
\end{equation*}
To bound $b_t$, observe that since $\th_t = \th_{t-i} - \n (\hnb \L_{t-i} + \cdots + \hnb \L_{t-1})$ and $|\hnb \L_s| \leq G$ and $i \leq H$, we have $|\th_{t-i} - \th_t| \leq HG\n$ for all $i, t$. Since $|f''(\xi_i)| \leq M$, we have
\begin{align*}
    |b_t| &\leq \frac{ \frac12 \sum_{i=1}^H MHG\n (\th_{t-i} - \th_t)^2}{\sum_{i=1}^H (\th_{t-i} - \th_t)^2} \\[10pt]
    &= \O(MGH\n).
\end{align*}
Next we bound $e_t$. Since we have assumed that $\th_t$ converge monotonically and $|\hnb \L_t|\geq g$, we have
$$\frac{1}{\sum_{i=1}^H (\th_{t-i} - \th_t)^2} \leq \frac{1}{\sum_{i=1}^H (ig\n)^2} = \O(\frac{1}{H^3 g^2\n^2}).$$
In the numerator, we have
\begin{equation*}
|\sum_{i=1}^H (\e_{t-i} - \e_t)(\th_{t-i} - \th_t)| \leq HG\n \sum_{i=1}^H |\e_{t-i} - \e_t|.
\end{equation*}
Combining these, we have
\begin{equation}\label{eq: bounding et}e_t = \O\left(\frac{G}{H^2 g^2 \n}\right)\sum_{i=1}^H |\e_{t-i} - \e_t|.\end{equation}
We make the additional simplifying assumption that the $|\e_{t-i} - \e_t|$ are independent. We can accomplish this splitting our dataset drawn from $\D(\th_t)$ into $H$ parts and estimating $f_t$ once with each component, then replacing the terms $(\hat{f}_{t-i} - \hft)$ with $(\hat{f}_{t-i} - \hat{f}_{t, i})$ in equation (\ref{eq: least sq for hfpt}), where $\hat{f}_{t, i}$ is the estimate of $f_t$ from the $i$-th partition of the dataset. The errors $\e_t$ in the expression for $e_t$ now become independent copies $\e_{t, i}$, and the terms in equation (\ref{eq: bounding et}) are indeed independent.

Under this assumption, $|\e_{t-i} - \e_{t, i}|$ are independent $2\tau^2$-subgaussian random variables. Their sum is therefore $\sum_{i=1}^H 2\tau^2 = \O(H\tau^2)$-subgaussian. Finally, by Lemma \ref{thm: subgaussian}, it follows that $e_t$ is $\O(\frac{G^2\tau^2}{H^3g^4\n^2})$-subgaussian.

%

\blockcomment{
For $e_t$, we make use of Lemma \ref{thm: subgaussian}. We have that $\e_{t-i} - \e_t$ is $4\tau^2$-subgaussian by the second part of the lemma. Then since $|\th_{t-i} - \th_t| \leq HG\n$, $(\e_{t-i} - \e_t)(\th_{t-i}-\th_t)$ is $4\tau^2 HG\n$-subgaussian. Applying the result for general sums of subgaussians, we then conclude that $\sum_{i=1}^{H} (\e_{t-i} - \e_t)(\th_{t-i}-\th_t)$ is $\O(H \sum_{i=1}^H \tau^2 HG\n) = \O(\tau^2 H^3 G\n)$-subgaussian. Finally, under the assumption that the $\th_t$ converge monotonically, since $|\hnb\L_t| \geq g$, we have $|\th_{t-i} - \th_t| \geq ig\n$. It follows that
$$\frac{1}{\sum_{i=1}^H (\th_{t-i} - \th_t)^2} \leq \frac{1}{\sum_{i=1}^H (ig\n)^2} = \O(\frac{1}{H^3g^2\n^2}).$$
With one final application of Lemma \ref{thm: subgaussian}, it follows that 
}
\end{proof}

With these two lemmas, we can now prove the main theorem. The structure of the proof is similar to that of Theorem \ref{thm: convergence}.
\begin{proof}[Proof of Theorem \ref{thm: long horizon perfgd bound}]
We first establish a high-probability bound on $|e_t|$. By the subgaussian tail bound and a union bound over $t = 1$ to $T$, a simple calculation shows that $$|e_t| = \O\left(\frac{G\tau\sqrt{\log \frac{T}{\g}}}{g^2 \n H^{3/2}}\right)$$ with probability at least $1-\g$ for all $t=1,\ldots,T$. Combining this bound with the bound on $|b_t|$ from Lemma \ref{thm: long horizon f' error bound}, we find that
\begin{equation*}
    |\hfpt - f'_t| = \O\left(MG\n H + \frac{ G \tau \sqrt{\log \frac{T}{\g}} }{g^2\n} H^{-3/2}\right).
\end{equation*}
With $H$ chosen as is in the theorem, this bound simplifies to
\begin{equation} \label{eq: high prob f' error bound}
    |\hfpt - f'_t| = \O\left( \frac{M^{3/5} G \tau^{2/5} (\log \frac{T}{\g})^{1/5}}{g^{4/5}} \n^{1/5} \right) \equiv \mathbf{E}_1.
\end{equation}
From the proof of Lemma \ref{thm: gradient error bound}, we know that
\begin{equation} \label{eq: long horizon grad bound 1}
    |\hnb \L_t - \nb \L_t| = \O\left(\lmax \left[ \mathbf{E}_1 + F\d \sqrt{\log \frac1\d} \right] \right),
\end{equation}
where $\d$ is a (high-probability) bound on the error of $f_t$. Again assuming that this error is $\tau^2$-subgaussian, we have that
$$(\textrm{error on } f_t) = \O\left(\tau\sqrt{\log\frac{T}{\g}}\right)$$
for all $t=1,\ldots,T$ with probability at least $1-\g$. Thus we can take $\d = \tau \sqrt{\log(T/\g)}$, in which case the second term in equation (\ref{eq: long horizon grad bound 1}) is $\O(\mathbf{E}_1)$ as $\tau \downarrow 0$. It follows that $|\hnb \L_t - \nb \L_t| = \O(\lmax \mathbf{E}_1) \equiv \mathbf{E}_2$ with high probability.

Finally, by the same analysis used in the proof of Theorem \ref{thm: convergence}, we have that
$$ \min_{1\leq t \leq T} |\nb \L_t|^2 = \O\left(\frac{\lmax + \n G T \mathbf{E}_2}{T\n}\right). $$
Choosing $\n$ as in the theorem statement and substituting our bound on $\mathbf{E}_2$ yields the desired result. The max in the theorem statement follows from the same logic as in Theorem \ref{thm: convergence} plus the bound on the error performative gradient error $\mathbf{E}_2$.
\end{proof}

\section{Experiment details}
In all of the following experiments, whenever the stated estimation horizon $H$ is longer than the entire history on a particular iteration of PerfGD, we simply use $H = $ length of the existing history for that iteration instead. Furthermore, in all of the experiments, both RGD and PerfGD were run using a learning rate of $\n = 0.1$.

\subsection{Mixture of Gaussians and nonlinear mean (\S \ref{sec: toy})}
For the nonlinear mean experiment, we set $a_0 = a_1 = 1$ and $\s^2 = 1$. At each iteration, we drew $n=500$ data points. We initialized PerfGD using only one step of RGD, and at each step after the initialization we used the previous $H = 4$ steps to estimate $\mu'(\th)$. The analytical values for $\opt$ and $\stab$ are given by
$$\opt = -\frac{2a_0}{3a_1}, \hspace{.25in} \stab = -\frac{a_0}{a_1}.$$

For the Gaussian mixture experiment, we set $\g = 0.5$, $\s_1^2 = 1$, $a_{1,0} = -0.5$, $a_{1,1} = 1$, $s_2^2 = 0.25$, $a_{2,0} = 1$, and $a_{2,1} = -0.3$. At each iteration, we drew $n = 1000$ data points. We initialized PerfGD using only one step of RGD, and at each step after the initialization we use the entire history to estimate $\mu_i'(\th)$. The analytical values for $\opt$ and $\stab$ are given by
$$\opt = -\frac12\frac{\g a_{1, 0} + (1-\g) a_{2, 0}}{\g a_{1, 1} + (1-\g) a_{2, 1}}, \hspace{.25in} \stab = \frac{\g a_{1, 0} + (1-\g) a_{2, 0}}{\g a_{1, 1} + (1-\g) a_{2, 1}}.$$

\subsection{Pricing (\S \ref{sec: pricing})}
We set $d = 5$ for this experiment. We then set $\mu_0 = 6 \cdot \mathbf{1} + \mathrm{Unif}[0, 1]^5$ with a fixed random seed; in this case, it came out to $\mu_0 \approx [6.55, 6.72, 6.60, 6.54, 6.42]^\T$. We set $\Sigma = I \in \R^{5\times 5}$ (i.e. the $5\times 5$ identity matrix) and $\e = 1.5$. At each iteration, we drew $n = 500$ data points. We initialized PerfGD with 14 steps of RGD, and at each step after initialization we used the entire history to estimate $d\mu/d\th$. The analytical values for $\opt$ and $\stab$ are given by
$$\opt = \frac{\mu_0}{2\e}, \hspace{.25in} \stab = \frac{\mu_0}{\e}.$$

\subsection{Binary classification (\S \ref{sec: classification})}
Here the features $x\in \R$ are one-dimensional, while our model parameters $\th \in \R^2$ allow for a bias term. We set $\s_0^2 = 0.25$, $\mu_0 = 1$, $\s_1^2 = 0.25$, $\mu_1 = -1$, and $\e = 3$. The regularization strength for $\ell$ was $\lambda = 10^{-2}$, i.e.
$$\ell(x, y; \th) = -y \log h_\th(x) - (1-y) \log (1 - h_\th(x)) + \frac{10^{-2}}{2} \lVert \th \rVert^2.$$
When approximating the derivatives of the means of the mixtures with respect to $\th$, we assume that it is known that the derivative of the non-spam email mean is independent of $\th$, and we also assume knowledge of the fact that the mean of the spam email features depends only on $\th_1$ (i.e. the non-bias parameter). At each iteration, we drew $n=500$ data points. We initialize PerfGD using only one step of RGD, and at each step after the initialization we use the entire history to estimate $f'(\th)$.

\subsection{Regression (\S \ref{sec: regression})}
We set $\mu_x = 1.67$, $\s_x^2 = 1$, $a_0 = a_1 = 1.67$, and regularization strength $\lambda = 3.33$ for the loss, i.e.
$$\ell(x, y; \th) = \frac12 (\th x - y)^2 + \frac{3.33}{2} |\th|^2.$$
The variance of $y|x$ was set to $4.12$. At each iteration, we drew $n = 500$ data points. The analytical values for $\opt$ and $\stab$ are given by
$$\opt = \frac{c \cdot a_0}{ c \cdot (1-a_1) + \frac{\lambda}{1-a_1} }, \hspace{.25in} \stab = \frac{c \cdot a_0}{ c \cdot (1-a_1) + \lambda },$$
where $c = \mu_x^2 + \s_x^2$.

\end{document}